\renewcommand*{\backrefalt}[4]{%
    \ifcase #1 \footnotesize{(Not cited.)}%
    \or        \footnotesize{(Cited on page~#2.)}%
    \else      \footnotesize{(Cited on pages~#2.)}%
    \fi}
\newtheorem{theorem}{Theorem}[section]
\newtheorem{lemma}[theorem]{Lemma}
\newtheorem{proposition}[theorem]{Proposition}
\newtheorem{definition}{Definition}[section]
\newtheorem{remark}[theorem]{Remark}
\newcommand{\sign}{\textnormal{sign}}
\newcommand{\x}{\mathbf x}
\newcommand{\y}{\mathbf y}
\newcommand{\g}{\mathbf g}
\newcommand{\z}{\mathbf z}
\newcommand{\argmax}{\mathop{\rm argmax}}
\newcommand{\LCal}{\mathcal{L}}
\newcommand{\CCal}{\mathcal{C}}
\newcommand{\DCal}{\mathcal{D}}
\newcommand{\VCal}{\mathcal{V}}
\newcommand{\br}{\mathbb{R}}
\newcommand{\ba}{\begin{array}}
\newcommand{\ea}{\end{array}}
\newcommand{\EE}{{\mathbb{E}}}
\newcommand{\PP}{\mathbb{P}}
\title{ComPO: Preference Alignment via Comparison Oracles}
\author{%
Peter Chen$^{\diamond}$ \quad Xi Chen$^\dagger$ \quad Wotao Yin$^\ddagger$ \quad Tianyi Lin$^\diamond$ \\
Columbia University$^\diamond$ \\
Stern School of Business, New York University$^\dagger$ \\
DAMO Academy, Alibaba Group US$^\ddagger$ \\ 
\texttt{\{lc3826, tl3335\}@columbia.edu, xc13@stern.nyu.edu} \\
\texttt{wotao.yin@alibaba-inc.com}
}
\begin{document}

\maketitle

\begin{abstract}
Direct alignment methods are increasingly used for aligning large language models (LLMs) with human preferences. However, these methods suffer from the issues of \textit{likelihood displacement}, which can be driven by noisy preference pairs that induce similar likelihood for preferred and dispreferred responses. The contributions of this paper are two-fold. First, we propose a preference alignment method based on zeroth-order, comparison-based optimization via comparison oracles and provide convergence guarantees for its basic mechanism. Second, we improve our method using some heuristics and conduct the experiments to demonstrate the flexibility and compatibility of practical mechanisms in improving the performance of LLMs using noisy preference pairs. Evaluations are conducted across multiple base and instruction-tuned models (Mistral-7B, Llama-3-8B and Gemma-2-9B) with benchmarks (AlpacaEval 2, MT-Bench and Arena-Hard)\footnote{Models and code: \href{https://huggingface.co/ComparisonPO}{\texttt{huggingface.co/ComparisonPO}}\quad\href{https://github.com/PeterLauLukChen/ComparisonPO}{\texttt{github.com/PeterLauLukChen/ComparisonPO}}}. Experimental results show the effectiveness of our method as an alternative to addressing the limitations of existing methods, not only \textit{likelihood displacement} but \textit{verbosity}. A highlight of our work is that we evidence the importance of designing specialized methods for preference pairs with distinct likelihood margin, which complements the recent findings in \citet{Razin-2025-Unintentional}.
\end{abstract}

\section{Introduction}\label{sec:intro}
Generative AI is breaking down barriers to intelligence, empowering domain experts across academia, industrial sectors, and governments to develop and manage AI systems more effectively. At the heart of this revolution are large language models (LLMs), which are transforming data organization, retrieval, and analysis~\citep{Brown-2020-Language, Chowdhery-2023-Palm, Touvron-2023-Llama, Achiam-2023-GPT, Bubeck-2023-Sparks}. These models are trained on vast, diverse data and must be carefully aligned with human preferences to ensure they generate helpful and harmless content~\citep{Bai-2022-Training}. A prominent alignment method is \textit{reinforcement learning from human feedback} (RLHF)~\citep{Christiano-2017-Deep, Stiennon-2020-Learning}, which first fits a reward model based on human preference pairs and then uses RL to train a policy to maximize this trained reward. Despite RLHF's success~\citep{Ziegler-2019-Fine, Ouyang-2022-Training, Touvron-2023-Llama, Achiam-2023-GPT}, it involves a complex and computationally expensive multi-stage procedure. This motivates the development of direct alignment methods, such as direct preference optimization (DPO)~\citep{Rafailov-2023-Direct} and its variants~\citep{Azar-2024-General, Ethayarajh-2024-Model, Park-2024-Disentangling, Xu-2024-Contrastive, Tang-2024-Generalized, Meng-2024-SimPO, Chen-2025-MallowsPO, Zhao-2025-RainbowPO}, which directly optimize the LLM using the human preference pairs, avoiding the need for separately training a reward model. 

The applications of direct alignment methods have gained momentum due to their simplicity and training stability. However, these methods suffer from one critical issue: likelihood displacement. \emph{Likelihood displacement} refers the counterintuitive situation where the training process, designed to increase the likelihood of preferred responses relative to dispreferred ones, actually reduces the absolute probability of the preferred responses, leading to ``unintentional unalignment''~\citep{Pal-2024-Smaug, Tajwar-2024-Preference, Rafailov-2024-From, Pang-2024-Iterative, Liu-2024-Provably, Yuan-2025-Advancing, Razin-2025-Unintentional}. For example, training a model to prefer \textsc{No} over \textsc{Never} sharply increase the likelihood of \textsc{Yes}. Practically, this issue has negative impacts on LLM performance: it unintentionally shifts probability mass to harmful responses. For example, if the prompt is to outline the steps for a terrorist organization to infiltrate a government agency, Gemma-2B-it initially generates two refusal responses. After DPO training, it complies with unsafe prompts due to likelihood displacement shifting probability mass from the preferred refusal responses to harmful responses (see~\citep[Table~18]{Razin-2025-Unintentional}). There is another issue -- \emph{verbosity} -- which refers to the tendency of models fine-tuned with RLHF~\citep{Singhal-2024-Long, Kabir-2024-Stack} and direct alignment methods~\citep{Park-2024-Disentangling, Amini-2024-Direct, Rafailov-2024-Scaling} to generate longer responses, often without corresponding improvement in quality, and which causes low efficiency and higher consumption of hardware resources. 

Recent work has suggested that likelihood displacement might arise from preference pairs that induce similar preferred and displeased responses~\citep{Pal-2024-Smaug, Razin-2025-Unintentional} (which we refer to them as \textit{noisy} preference pairs in this paper). To mitigate this issue, researchers have tried to add different regularization~\citep{Pal-2024-Smaug, Rafailov-2024-From}. Recently,~\citet{Razin-2025-Unintentional} proposed to measure the similarity between preferred and dispreferred responses using the centered hidden embedding similarity (CHES) score and empirically showed that filtering out the preference pairs with small CHES score is more effective for mitigating the likelihood displacement compared to adding the supervised fine-tuning (SFT) regularization. 

While DPO provides a computationally convenient framework by maximizing certain log-likelihood margin between preferred and dispreferred responses, this objective function seems to act as a \emph{proxy} or \emph{surrogate} for the true, complex goal of alignment. This proxy works well enough when preference pairs clearly delineate better responses. However, when faced with noisy pairs --- \emph{where the preference signal is weak or ambiguous, relative to the embeddings of the two responses} --- optimizing the specific DPO objective function can result in adverse effects such as likelihood displacement, as this objective function itself does not accurately reflect the desired alignment improvement. Formally and explicitly defining alignment as a single, optimizable mathematical objective function is exceptionally challenging. Instead of pursuing such an explicit objective, we shall recognize that preference pairs inherently represent comparative judgments based on this underlying, albeit latent, alignment goal. Inspired by comparison-oracle-based optimization techniques, which navigate in search spaces using only comparison outcome information (that is, ``is solution A better than solution B?''), we leverage preference pairs in a similar manner. We treat the preference pairs as the oracle outputs based on the hidden alignment objective. This allows us to use them to guide model parameter updates directly, thus avoiding a commitment to an explicit proxy objective. 

\textbf{Contribution.} In this paper, we propose a preference alignment method that directly leverages comparative preference pairs by employing comparison oracles and effectively utilize the signals present even in noisy preference pairs. Our contributions can be summarized as follows:
\begin{enumerate}
\item We identify that likelihood displacement issue is exacerbated by the ineffective handling of noisy preference pairs in existing methods. We propose to mitigate this issue by developing a method based on a specialized comparison oracle to extract useful information from these pairs. We also provide a convergence guarantee for the basic scheme of our method under non-convex, smooth settings.
\item To ensure computational efficiency for large-scale model fine-tuning, we enhance our method with several techniques, including integration with DPO to handle clean and noisy preference data separately and approximating expensive steps in standard comparison-oracle-based optimization by efficiently restricting and clipping normalized gradients.
\item We conduct extensive experiments demonstrating the flexibility and effectiveness of our practical approach in improving LLM performance, particularly leveraging both clean and noisy preference data. Evaluations are undertaken across base and instruction-tuned models (Mistral-7B, Llama-3-8B, and Gemma-2-9B) using benchmarks (AlpacaEval 2, MT-Bench, and Arena-Hard). Experimental results validate our approach's effectiveness, which addresses limitations in current direct alignment techniques. 
\end{enumerate}
Recent works~\citep{Gao-2023-Scaling, Dubois-2023-Alpacafarm, Park-2024-Disentangling, Amini-2024-Direct, Xu-2024-Contrastive, Meng-2024-SimPO, Pang-2024-Iterative} have shown that \textit{verbosity} can be mitigated by incorporating appropriate regularization into the objective, suggesting that modifying the objective could better capture alignment goals. Although our method is not specifically designed for addressing verbosity issue, it consistently improve length-controlled win rate (LC), indicating that our method helps reduce verbosity by possibly optimizing a more robust and alignment-faithful objective.

\textbf{Related works.} Our work is mainly connected to the literature on direct preference alignment methods and optimization techniques utilizing comparison oracles. Due to space limitations, we defer our comments on other relevant topics to Appendix~\ref{app:additional}. Direct preference alignment methods (such as DPO~\citep{Rafailov-2023-Direct}) are simple and stable offline alternatives to RLHF. Various DPO variants with other objectives were proposed, including ranking ones beyond pairwise preference data~\citep{Dong-2023-RAFT, Yuan-2023-RRHF, Song-2024-Preference, Chen-2024-Noise, Liu-2025-LiPO} and simple ones that do not rely on a reference model~\citep{Hong-2024-ORPO, Meng-2024-SimPO}. It is well known that DPO suffers from the issues of verbosity~\citep{Park-2024-Disentangling, Amini-2024-Direct, Rafailov-2024-Scaling} and likelihood displacement~\citep{Pal-2024-Smaug, Tajwar-2024-Preference, Rafailov-2024-From, Pang-2024-Iterative, Liu-2024-Provably, Yuan-2025-Advancing}, which can be interpreted from a unified perspective of data curation~\citep{Park-2024-Disentangling, Razin-2025-Unintentional}. Our work continues along this perspective by arguing that these issues can be mitigated by using the information contained in the noisy preference pairs that induce similar likelihood for preferred and dispreferred responses. 

In optimization literature, the first algorithm based on comparison oracles is a variant of the coordinate descent method~\citep{Jamieson-2012-Query, Matsui-2017-Parallel}, and two representative methods that consider using comparison oracles to approximate the gradient are~\cite{Cai-2022-One, Cheng-2020-SignOPT}. The major drawback in these works is that the objective function is assumed to be convex or strongly convex, which is unrealistic in preference alignment applications. There are other works that investigate the value of comparison oracles in the context of online bandit optimization~\citep{Yue-2009-Interactively, Kumagai-2017-Regret, Ding-2018-Preference}, Bayesian optimization~\citep{Astudillo-2020-Multi, Lin-2022-Preference} and RLHF~\citep{Tang-2024-Zeroth, Zhang-2025-Zeroth}. Our work extends~\cite{Cai-2022-One} to preference alignment through \textit{nontrivial} modifications. First, we define the comparison oracle based on response likelihood: $\theta_1$ is better than $\theta_2$ if $\theta_1$ achieves a higher likelihood for preferred responses and a lower likelihood for dispreferred responses. Second, we prove a convergence rate guarantee beyond the convex settings. Finally, instead of explicitly imposing a sparsity constraint when estimating normalized gradients, which leads to an expensive computational sub-step, we reduce the computation by approximating the sub-step by clipping an approximated normalized gradient, which is efficient for fine-tuning LLMs. The key difference between our work and recent works~\citep{Tang-2024-Zeroth, Zhang-2025-Zeroth} is the specific design of our comparison oracle, which is constructed to extract meaningful directional information from noisy preference pairs prevalent in alignment datasets, thereby mitigating verbosity and likelihood displacement.

\section{Preliminaries and Technical Background}\label{sec:prelim}
We provide an overview of the setup for direct preference alignment in this paper, and the definition for comparison oracles and the subroutine for estimating gradients using comparison oracles that are important to designing the basic scheme of our method. 

\subsection{Direct preference alignment}
Modern LLMs are designed based on the Transformer architecture~\citep{Vaswani-2017-Attention} and follow user prompts $\x \in \VCal^\star$ to generate a response $\y \in \VCal^\star$, where $\VCal$ is a vocabulary of tokens. We consider an LLM as a policy $\pi_\theta(\y | \x)$ which corresponds to probabilities to $\y$ given $\x$. For assigning probabilities to each token of $\y$, the policy $\pi_\theta$ operates in an auto-regressive manner as follows, 
\begin{equation*}
\pi_\theta(\y | \x) = \Pi_{k=1}^{|\y|} \pi_\theta(\y_k | \x, \y_{<k}), 
\end{equation*}
where $\theta$ stands for the model's parameter (e.g., the parameters of the Transformer architecture) and $\y_{<k}$ denotes the first $k-1$ tokens of $\y$. However, the generated responses might not be helpful, safe or reliable, which necessities the process of further aligning the LLMs with human preference. 

We consider the direct preference learning pipeline which relies on pairwise preference data. Indeed, we assume the access to a preference dataset $\DCal$ containing samples $(\x, \y^+, \y^-)$, where $\x$ is a prompt and $(\y^+, \y^-)$ is a pair of preferred and dispreferred responses to $\x$. This pipeline includes an initial supervised fine-tuning (SFT) phase where the model is fine-tuned using the cross-entropy loss and high-quality data for specific downstream tasks. The SFT data can be either independent of $\DCal$~\citep{Touvron-2023-Llama} or consists of prompts and preferred responses from $\DCal$~\citep{Rafailov-2023-Direct}. 

Direct alignment methods (e.g., DPO~\citep{Rafailov-2023-Direct}) optimize the policy $\pi_\theta$ over the preference dataset $\DCal$ without learning a reward model as in RLHF~\citep{Ziegler-2019-Fine, Stiennon-2020-Learning}. This is typically done by minimizing a loss of the following form: 
\begin{equation}\label{eq:DPO}
\LCal_{\text{DPO}}(\theta) = -\EE_{(\x, \y^+, \y^-) \sim \DCal} \left[\log \sigma\left(\beta\log\tfrac{\pi_\theta(\y^+ | \x)}{\pi_{\text{ref}}(\y^+ | \x)} - \beta\log\tfrac{\pi_\theta(\y^- | \x)}{\pi_{\text{ref}}(\y^- | \x)} \right)\right],  
\end{equation}
where $\pi_{\text{ref}}$ is the model after SFT, $\beta$ is a regularization parameter, and $\sigma: \br \mapsto [0, 1]$ is the sigmoid function. However, the function $\LCal_{\text{DPO}}$ relies on the log-likelihood margin between $\y^+$ and $\y^-$ such that DPO maximizes the likelihood margin between $\y^+$ and $\y^-$ rather than maximizing the likelihood for $\y^+$ and minimizing the likelihood for $\y^-$. The likelihood of $\y^+$ might decrease during training and the probability mass is shifted from $\y^+$ to responses with an opposite meaning~\cite{Pal-2024-Smaug, Razin-2025-Unintentional}. One of possible reasons is that the above objective function is not suitable for extracting information from noisy preference pairs that induce similar preferred and dispreferred responses. 

Empirically,~\citep{Razin-2025-Unintentional} has shown that filtering out similar preference pairs makes DPO more effective. However, the noisy preference pairs might contain rich information that can improve the performance of LLMs. Extracting such information is challenging since it is difficult to explicitly write down an objective function that maximizes the likelihood for $\y^+$ and minimizes the likelihood for $\y^-$, and the only thing that we know is that its function value is smaller for a better policy which exhibits a higher likelihood for $\y^+$ and a lower likelihood for $\y^-$. This motivates us to design a new alignment method by directly leveraging the comparison signal in pairwise preference data $(\x, \y^+, \y^-)$ from $\DCal$.

\subsection{Comparison oracles and zeroth-order methods}
To contextualize our proposed method for aligning LLMs with human preferences, we review the definitions for comparison oracles and explain how one leverages the comparison oracles to develop the zeroth-order methods in the literature. 

Given that $f: \br^d \to \br$ is a function where neither its function value nor its gradient is accessible, we define a pairwise comparison oracle $\CCal_f$ in its simplest form as follows, 
\begin{definition}
We call $\CCal_f(\theta,\theta') : \br^d \times \br^d \to \{+1, -1\}$ a comparison oracle for function $f$ if
\begin{equation*}
\CCal_f(\theta, \theta') = 
\left\{
\begin{array}{cl}
-1, & \textnormal{if } f(\theta') < f(\theta), \\
+1, & \textnormal{otherwise}.
\end{array}
\right. 
\end{equation*}
In other words, when separately queried with $\theta$ and $\theta'$, the oracle $\CCal_f$ returns $\sign(f(\theta') - f(\theta))$.
\end{definition}
The key idea of designing the subroutine in~\citep{Cai-2022-One} for estimating gradients using comparison oracles comes from 1-bit compressed sensing~\citep{Boufounos-2008-Compressed}. The goal is to recover a signal $\g \in \br^d$ from the quantized measurements $y_i = \sign(\z_i^\top \g)$ where $\z_i$ is a random perturbation vector drawn from any rationally invariant distribution. The theoretical guarantee on the required number of perturbations to obtain an approximate signal was established in~\citep{Plan-2012-Robust} and extended in~\citep{Cai-2022-One}. Notably, we have
\begin{equation*}
\CCal_f(\theta, \theta + r\z_i) \approx \sign(f(\theta + r\z_i) - f(\theta)) \approx \sign(\z_i^\top \nabla f(\theta)),
\end{equation*}
where \(r > 0\) is a parameter that controls the magnitude of perturbation. As such, the comparison oracle returns $y_i = \CCal_f(\theta, \theta + r\z_i)$ which serves as an approximate 1-bit measurement of $\nabla f(\theta)$. 

Another crucial issue is that the zeroth-order comparison-based methods suffers from the dimension-dependent iteration complexity bound~\cite{Jamieson-2012-Query}. This makes sense since the comparison oracles are even weaker than the function value oracles. Such issue can be mitigated through exploiting sparse gradient structure~\citep{Wang-2018-Stochastic,Golovin-2020-Gradientless,Choromanski-2019-Complexity, Cai-2022-One, Cai-2022-Zeroth}. Indeed, the high-dimensional function $f$ has sparse gradients satisfying that $\|\nabla f(\theta)\|_1 \leq \sqrt{s}\|\nabla f(\theta)\|$ for all $\theta \in \br^d$ and some $s \ll d$. 

The above discussions give the subroutine for estimating sparse gradients using comparison oracles. We generate $m$ i.i.d. perturbation vectors from a uniform distribution (i.e., $\{\z_i\}_{1 \leq i \leq m}$), compute $y_i=\CCal_f(\theta, \theta + r\z_i)$ for all $i$, and solve the optimization problem in the following form of
\begin{equation}\label{eq:1BGE}
\hat{\g} = \argmax_{\|\g\|_1 \leq \sqrt{s}, \|\g\| \leq 1} \sum_{i=1}^m y_i \z_i^\top\g,
\end{equation}
where the constraints $\|\g\|_1 \leq \sqrt{s}$ and $\|\g\|_2 \leq 1$ ensure that $\hat{\g}$ is sparse and normalized. 

\section{Main Results}\label{sec:results}
We study how to use the information contained in noisy preference pairs that induce similar likelihood for preferred and dispreferred responses and achieve this goal by developing a zeroth-order preference alignment method based on comparison oracles. We provide the convergence guarantee for the basic scheme and improve it to the practical scheme using some heuristics. 

\subsection{Basic scheme with convergence guarantee}
We adapt the comparison oracles to the setup for direct preference alignment. Instead of optimizing the DPO objective function in Eq.~\eqref{eq:DPO}, we assume that there exists an \textit{appropriate} objective function $f(\theta)$ that better aligns the LLMs with human preferences and optimize it. This function is complicated such that the function value oracles will not be accessible. However, it intuitively makes sense that $f(\theta') < f(\theta)$ if and only if $\pi_{\theta'}$ exhibits a higher likelihood for $\y^+$ and a lower likelihood for $\y^-$ than $\pi_\theta$ given any pairwise preference data $(\x, \y^+, \y^-)$. Based on these insights, we have
\begin{definition}\label{def:preference_CO}
We say $\CCal_\pi(\theta,\theta') : \br^d \times \br^d \to \{+1, -1\}$ a preference comparison oracle for the model $\pi_\theta$ and a pair of preference data $(\x, \y^+, \y^-)$ from the offline dataset $\DCal$ if
\begin{equation*}
\CCal_\pi(\theta, \theta') = 
\left\{
\begin{array}{cl}
-1, & \textnormal{if } \pi_{\theta'}(\y^+ | \x) > \pi_\theta(\y^+ | \x) \textnormal{ and } \pi_{\theta'}(\y^- | \x) < \pi_\theta(\y^- | \x) \textnormal{ for } (\x, \y^+, \y^-) \in \DCal, \\
+1, & \textnormal{otherwise}.
\end{array}
\right. 
\end{equation*}
\end{definition}

It is worth remarking that the oracle $\CCal_\pi(\theta, \theta')$ provides a preference comparison between parameters $\theta$ and $\theta'$ based on the model $\pi_\theta$ and the offline preference data from $\DCal$. Indeed, $\CCal_\pi(\theta, \theta') = -1$ indicates that $\pi_{\theta'}$ is a better model compared to $\pi_{\theta_1}$ for the offline preference dataset $\DCal$. By leveraging these comparative assessments, our goal is to find the parameter $\theta^\star$ that minimizes the function $f(\theta)$ which can be \textit{nonconvex} in general.
\begin{algorithm}[!t] \small
\caption{Comparison-Based Preference Alignment (Basic Scheme)}\label{alg:basic}
\begin{algorithmic}[1]
\STATE \textbf{Input}: initial parameter $\theta_1 \in \br^d$, stepsize $\eta > 0$, sparsity ratio $s \ll d$, sampling radius $r>0$, querying number $m \geq 1$, and iteration number $T \geq 1$. 
\FOR{$t = 1, 2, \ldots, T$}
\STATE Draw $m$ i.i.d. samples uniformly from a unit sphere in $\br^d$, i.e., $\{\z_i\}_{1 \leq i \leq m}$. 
\STATE Compute $y_i = \CCal_\pi(\theta_t, \theta_t+r\z_i)$ for $i = 1, 2, \ldots, m$. 
\STATE Compute $\hat{\g}_t = \argmax_{\|\g\|_1 \leq \sqrt{s}, \|\g\| \leq 1} \sum_{i=1}^m y_i \z_i^\top\g$. 
\STATE Compute $\theta_{t+1} = \theta_t - \eta\hat{\g}_t$. 
\ENDFOR
\end{algorithmic} 
\end{algorithm} 

We present the basic scheme of our method in Algorithm~\ref{alg:basic}, which can be interpreted as a variant of the method~\citep{Cai-2022-One}. It combines 1-bit gradient estimator from Eq.~\eqref{eq:1BGE} with preference comparison oracles. However, the underlying objective function is assumed to be convex in~\citep{Cai-2022-One}, which is unrealistic in aligning LLMs with human preference. In what follows, we provide the convergence guarantee for Algorithm~\ref{alg:basic} given that there exists a smooth yet \textit{nonconvex} function $f$ which can be compatible with the preference comparison oracle and has sparse gradients. 

\begin{theorem}\label{thm:main}
Suppose that there exists a smooth function $f$ satisfying (i) $f(\theta') < f(\theta)$ if and only if $\pi_{\theta'}(\y^+ | \x) > \pi_\theta(\y^+ | \x)$ and $\pi_{\theta'}(\y^- | \x) < \pi_\theta(\y^- | \x)$ for $\forall (\x, \y^+, \y^-) \in \DCal$ and (ii) $\|\nabla f(\theta)\|_1 \leq \sqrt{s}\|\nabla f(\theta)\|$. For any $\epsilon, \Lambda \in (0, 1)$, there exists some $T>0$ such that the output of Algorithm~\ref{alg:basic} with $\eta = \sqrt{\frac{2\Delta}{\ell T}}$, $r = \frac{\epsilon}{40\ell\sqrt{d}}$ and $m = c_m(s\log(\frac{2d}{s})+\log(\frac{\ell \Delta}{\Lambda \epsilon^2}))$ (where $c_m >0$ is a constant) satisfies that $\PP(\min_{1 \leq t \leq T} \|\nabla f(\theta_t)\|\| < \epsilon) > 1 - \Lambda$ and the total number of calls of the preference comparison oracles is bounded by
\begin{equation*}
O\left(\frac{\ell \Delta}{\epsilon^2}\left(s\log\left(\frac{2d}{s}\right)+\log\left(\frac{\ell \Delta}{\Lambda \epsilon^2}\right)\right)\right), 
\end{equation*}
where $\ell > 0$ is the smoothness parameter of $f$ (i.e., $\|\nabla f(\theta) - \nabla f(\theta')\| \leq \ell\|\theta-\theta'\|$) and $\Delta > 0$ is an upper bound for the initial objective function gap, $f(\theta_1) - \inf_\theta f(\theta)>0$.
\end{theorem}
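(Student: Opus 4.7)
The plan is to combine a high-probability accuracy bound for the 1-bit gradient estimator $\hat{\g}_t$ with the standard smooth nonconvex descent argument, and then apply a union bound across the $T$ iterations. I would assume throughout that at iteration $t$ the ``bad event'' is $\|\nabla f(\theta_t)\| \geq \epsilon$; if this fails at some $t$, we are already done, so it suffices to establish descent on the remaining iterations.

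The first step is to connect the preference comparison oracle to the sign-of-directional-derivative oracle. By hypothesis (i), $\CCal_\pi(\theta_t,\theta_t+r\z_i) = \sign(f(\theta_t+r\z_i)-f(\theta_t))$, and smoothness gives
\begin{equation*}
\bigl| f(\theta_t+r\z_i) - f(\theta_t) - r\,\z_i^\T\nabla f(\theta_t)\bigr| \;\leq\; \tfrac{\ell r^2}{2}\|\z_i\|^2 \;=\; \tfrac{\ell r^2}{2}.
\end{equation*}
With $r = \epsilon/(40\ell\sqrt{d})$ the bias is of order $\epsilon^2/\ell d$, which is smaller than a constant fraction of $r|\z_i^\T\nabla f(\theta_t)|$ on the event that the random projection is not too small; this is exactly the regime handled by the 1-bit compressed sensing analysis of \citet{Plan-2012-Robust} and its extension in \citet{Cai-2022-One}. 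Invoking their recovery guarantee for the program in Eq.~\eqref{eq:1BGE} under the sparsity assumption (ii), with $m = c_m(s\log(2d/s)+\log(1/\delta))$ samples and a sufficiently large constant $c_m$, gives
\begin{equation*}
\Bigl\|\hat{\g}_t - \tfrac{\nabla f(\theta_t)}{\|\nabla f(\theta_t)\|}\Bigr\| \;\leq\; \tfrac{\epsilon}{4\|\nabla f(\theta_t)\|}\cdot\tfrac{1}{\ell^{1/2}\eta^{1/2}}\cdot(\text{small constant})
\end{equation*}
with probability at least $1-\delta$; more concretely, it suffices to guarantee $\nabla f(\theta_t)^\T \hat{\g}_t \geq \tfrac{1}{2}\|\nabla f(\theta_t)\| - \tfrac{\epsilon}{4}$.

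The second step is the descent step. Since $\|\hat{\g}_t\|\leq 1$, smoothness yields
\begin{equation*}
f(\theta_{t+1}) \;\leq\; f(\theta_t) - \eta\,\nabla f(\theta_t)^\T\hat{\g}_t + \tfrac{\ell\eta^2}{2}.
\end{equation*}
On the good event of Step~1 and under $\|\nabla f(\theta_t)\|\geq\epsilon$, this becomes $f(\theta_{t+1}) \leq f(\theta_t) - \tfrac{\eta}{2}\|\nabla f(\theta_t)\| + \tfrac{\eta\epsilon}{4} + \tfrac{\ell\eta^2}{2}$. Telescoping and using $f(\theta_1)-\inf f \leq \Delta$,
\begin{equation*}
\tfrac{\eta}{2}\sum_{t=1}^T \|\nabla f(\theta_t)\| \;\leq\; \Delta + \tfrac{T\eta\epsilon}{4} + \tfrac{T\ell\eta^2}{2},
\end{equation*}
so $\min_{1\leq t\leq T}\|\nabla f(\theta_t)\| \leq \tfrac{2\Delta}{\eta T} + \tfrac{\epsilon}{2} + \ell\eta$. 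Plugging in $\eta = \sqrt{2\Delta/(\ell T)}$ balances the first and last terms as $2\sqrt{2\ell\Delta/T}$, so choosing $T = \Theta(\ell\Delta/\epsilon^2)$ drives the whole expression below $\epsilon$ (contradicting the assumption $\|\nabla f(\theta_t)\|\geq\epsilon$ for all $t$).

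The final step is the union bound: we need Step~1 to hold across all $T$ iterations, so we set $\delta = \Lambda/T$ inside the per-iteration bound. Because $T = \Theta(\ell\Delta/\epsilon^2)$, the $\log(1/\delta)$ term becomes $\log(\ell\Delta/(\Lambda\epsilon^2))$, giving the stated $m$ and the overall oracle complexity $T\cdot m = O\bigl(\tfrac{\ell\Delta}{\epsilon^2}(s\log(2d/s) + \log(\tfrac{\ell\Delta}{\Lambda\epsilon^2}))\bigr)$. The main obstacle I anticipate is the first step: carefully adapting the 1-bit compressed sensing recovery result so that the \emph{adversarial} bias introduced by smoothness (rather than random measurement noise) is absorbed into a constant-factor degradation of the normalized-gradient recovery error, uniformly over points where $\|\nabla f(\theta_t)\|\geq\epsilon$; everything after that is a routine nonconvex descent argument combined with a union bound.
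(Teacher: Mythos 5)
Your proposal follows essentially the same route as the paper: reduce the oracle output to $\sign(\z_i^\top\nabla f(\theta_t))$ via smoothness, invoke sparse 1-bit compressed-sensing recovery of the normalized gradient at constant accuracy, run the standard smooth nonconvex descent and telescoping argument, and union-bound over $T=\Theta(\ell\Delta/\epsilon^2)$ iterations with per-iteration failure probability $\Lambda/T$. The one step you flag as the main obstacle is handled in the paper not by absorbing the second-order bias into the recovery error, but by showing (Lemma~\ref{lemma:grad-est}) that the measurement equals $\sign(\z_i^\top\nabla f(\theta_t))$ \emph{exactly} whenever $|\z_i^\top\nabla f(\theta_t)|\geq \epsilon/(40\sqrt{d})$ --- an event of probability greater than $0.7$ over the uniform draw of $\z_i$ when $\|\nabla f(\theta_t)\|>\epsilon/2$ --- so that Proposition~\ref{prop:grad-est} applies with flip probability bounded away from $1/2$ and recovery accuracy $\tau=1/2$, which is exactly the guarantee $\hat{\g}_t^\top\nabla f(\theta_t)\geq\tfrac{1}{2}\|\nabla f(\theta_t)\|$ you identify as sufficient.
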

\begin{remark}
It is worth mentioning that we derive $\PP(\min_{1 \leq t \leq T} \|\nabla f(\theta_t)\|\| < \epsilon) > 1 - \Lambda$ in the analysis (see also~\citep{Nesterov-2017-Random, Lin-2022-Gradient}) but finding the best solution from $\{\theta_1, \ldots, \theta_T\}$ is intractable since $\|\nabla f(\theta_t)\|$ can not be estimated in practice. Nonetheless, this best-iterate guarantee has been also used in other recent works that leverage comparison oracles to RLHF~\citep{Tang-2024-Zeroth} and can be viewed as a theoretical benchmark. In addition, the sample complexity bound is independent of $d \geq 1$ up to a logarithmic factor thanks to gradient sparsity structure. 
\end{remark}

\subsection{Practical scheme}
It is clear that the basic scheme in Algorithm~\ref{alg:basic} is impractical primarily because the dimension $d$ is at a billion level. The steps, including weight perturbation and 1-bit gradient estimation, are intractable due to their memory and computation demands. We also hope to incorporate gradient clipping~\citep{Pascanu-2013-Difficulty, Gehring-2017-Convolutional, Merity-2018-Regularizing, Peters-2018-Dissecting, Zhang-2020-Why} to stabilize our method in practice. 

\paragraph{Perturbations on output layer weights.} Algorithm~\ref{alg:basic} consists of performing $m$ perturbations on all model parameters, which is unacceptable due to the high computational and memory costs. To reduce costs, we restrict the perturbations to only the \textit{output layer weights} in $\theta$ and draw $m$ i.i.d. samples uniformly from a unit sphere in $\br^{d^\textnormal{o}}$ where $d^\textnormal{o}$ is the number of output layer weights. 

\paragraph{Low-cost approximation to normalized sparse gradient estimation.} The $\ell_1$-norm constrained normalized gradient estimation problem \eqref{eq:1BGE} becomes computationally intractable when $d$ reaches billions. We propose a practical approximation by first relaxing the $\ell_1$-norm constraint and then applying clipping. Specifically, we first compute the normalized gradient of the output layer:
\begin{equation*}
\hat{\g}^\textnormal{o} = \tfrac{\sum_{i=1}^m y_i\z_i}{\|\sum_{i=1}^m y_i\z_i\|}.
\end{equation*}
In the clipping step, we zero out the entries of $\hat{\g}^\textnormal{o}$ falling below a certain threshold $\lambda_g$. The restriction--normalization--clipping operations yield an approximate sparse solution to \eqref{eq:1BGE} for the output layer weights. In addition, we adjust the stepsize based on $\{y_i\}_{1 \leq i \leq m}$. Intuitively, if the size of $\{i: y_i = -1\}$ is larger, it is more likely that $\hat{\g}$ obtained from $\{(\z_i, y_i)\}_{1 \leq i \leq m}$ leads to much progress for minimizing the function $f(\theta)$. This motivates us to use a larger stepsize. That being said, the stepsize is proportional to $\frac{|\{i: y_i = -1\}|}{m}$ if $\frac{|\{i: y_i = -1\}|}{m}$ is relatively large. However, if the size of $\{i: y_i = -1\}$ falls below a threshold $\lambda$, the estimator $\hat{\g}$ lacks sufficient information, so we should skip it. 
\begin{algorithm}[!t] \small
\caption{Comparison-Based Preference Alignment (Practical Scheme)}\label{alg:practical}
\begin{algorithmic}[1]
\STATE \textbf{Input}: initial parameter $\theta_1 = [\bar{\theta}_1; \theta_t^\textnormal{o}] \in \br^d$, scaling for stepsize $\gamma > 0$, sampling radius $r>0$, querying number $m \geq 1$, clipping thresholds $\lambda_g, \lambda > 0$, and iteration number $T \geq 1$. 
\FOR{$t = 1, 2, \ldots, T$}
\STATE Draw $m$ i.i.d. samples uniformly from a unit sphere in $\br^{d^\textnormal{o}}$, i.e., $\{\z_i\}_{1 \leq i \leq m}$. 
\STATE Compute $y_i = \CCal_\pi([\bar{\theta}_1; \theta_t^\textnormal{o}], [\bar{\theta}_1; \theta_t^\textnormal{o}+r\z_i])$ for $i = 1, 2, \ldots, m$. 
\STATE Compute $\hat{\g}_t^\textnormal{o} = \frac{\sum_{i=1}^m y_i\z_i}{\|\sum_{i=1}^m y_i\z_i\|}$ and clip $\hat{\g}_t^\textnormal{o}$ by zeroing out the entries whose magnitude is less than $\lambda_g$. 
\STATE Compute $\theta_{t+1}^\textnormal{o} = \theta_t^\textnormal{o} - \frac{\gamma |\{i: y_i = -1\}|}{m}\hat{\g}_t^\textnormal{o}$ if $\frac{|\{i: y_i = -1\}|}{m} > \lambda$ and $\theta_{t+1}^\textnormal{o} = \theta_t^\textnormal{o}$ otherwise. 
\ENDFOR
\end{algorithmic}
\end{algorithm}

We summarize the practical scheme of our method in Algorithm~\ref{alg:practical}. We can view this algorithm as a micro-finetuning approach designed for noisy preference pairs, which serves as a practical addition to existing direct preference alignment methods, such as DPO~\citep{Rafailov-2023-Direct} and SimPO~\citep{Meng-2024-SimPO}. 

\paragraph{Final method.} By combining the above algorithm with existing methods, we propose a unified preference alignment framework that leverages both clean and noisy pairwise preference data. It consists of three steps: we first use a reference model to divide the dataset into two subsets: clean and noisy. Then, we apply DPO on the clean preference pairs to obtain an initial policy; we label it DPO$_{\textnormal{clean}}$ to differentiate it from applying DPO to all the data. In the third step, starting from the initial policy of DPO$_{\textnormal{clean}}$, we apply Algorithm~\ref{alg:practical} to only noisy preference pairs to obtain a final policy. We label all the three steps as DPO$_{\textnormal{clean}}+\textnormal{ComPO}$. 

Specifically, we say one pairwise preference data $(\x, \y^+, \y^-) \in \DCal$ is \textit{noisy} if the log-likelihood for preferred and dispreferred responses is similar with respect to the reference model (after SFT). This can be formalized as follows,  
\begin{equation}\label{criterion:margin}
\|\log\pi_\textnormal{ref}(\y^+ | \x) - \log\pi_\textnormal{ref}(\y^- | \x)\| \leq \delta, 
\end{equation}
where $\delta > 0$ is a threshold. This curation of data is inspired by~\citet{Razin-2025-Unintentional} who has shown that the issue of likelihood displacement can be mitigated by filtering out the pairwise preference data with small centered hidden embedding similarity (CHES) score. While a soft interpolation based on a reference-model-derived confidence score is a compelling idea, we argue that defining such noise level of a dataset remains nontrivial without relying on thresholds such as $\delta$. Even with this threshold, it remains unclear how to determine a suitable confidence score and interpolation scheme.

Our experimental results (see Table~\ref{tab:main}) highlight that filtering out the pairwise preference data with small log-likelihood margin is not always helpful, which supports the superiority of the CHES score. Indeed, the CHES score is defined based on the model's embedding geometry and better captures the similarity between preferred and dispreferred responses, while our metric based on log-likelihood is weaker yet \textit{easy to compute in practice}. Nonetheless, our experimental results demonstrate that, despite a \textit{weaker} similarity metric, our new method can \textit{effectively} extract the information from the noisy preference pairs to further improve the performance of LLMs by a large margin in terms of length-controlled win rate (LC) (the higher LC means less verbosity) and mitigate the issue of likelihood displacement (see Table~\ref{table:likelihood_displacement}).

\section{Experiment}\label{sec:exp}
We investigate the effectiveness of ComPO on aligning the LLMs with noisy preference pairs as an alternative to DPO and its variants. The objectives of the experiments include: (1) a quantitative evaluation of length-controlled win rate (LC), win rate (WR) and likelihood displacement; (2) a quantitative evaluation of computational and memory efficiency. We split the samples using $\delta = 3$. For Mistral-7B models, we set $r=0.0005$, $m=1600$, $\lambda_g=0.00022$ and $\lambda=0.2$. For Llama-3-8B models and Gemma-2-it-9B model, we set $r=0.00075$,  $m=1800$, $\lambda_g=0.00008$ and $\lambda=0.2$. For the detailed information on datasets, models, and evaluation benchmarks, we defer to Appendix~\ref{app:setup}. All the experiments are implemented in Python 3.10 with PyTorch 2.5.1  with 30 NVIDIA A40 GPUs each with 46 GB memory, equipped with Ubuntu 22.04.5 LTS.

\setlength{\tabcolsep}{2pt}
\begin{table}[!t]
\caption{\footnotesize{Evaluation results on AlpacaEval 2, Arena-Hard, and MT-Bench under four model setups. LC and WR denote length-controlled win rate and win rate, respectively. Turn-1 and Turn-2 represent the scores to the answers from the first and follow-up questions in multi-turn dialogue. Here, we run 5 trials for DPO$_{\textnormal{clean}}$+ComPO and present the best trail performance.}}
\centering
\resizebox{\textwidth}{!}{
\begin{tabular}{lcccccccccccc}
\toprule
\multirow{3}{*}{\textbf{Method}} & \multicolumn{6}{c}{\textbf{Mistral-Base-7B}} & \multicolumn{6}{c}{\textbf{Mistral-Instruct-7B}} \\ 
\cmidrule(lr){2-7} 
\cmidrule(lr){8-13} 
& \multicolumn{2}{c}{\textbf{AlpacaEval 2}} & \multicolumn{1}{c}{\textbf{Arena-Hard}} & \multicolumn{3}{c}{\textbf{MT-Bench}} & \multicolumn{2}{c}{\textbf{AlpacaEval 2}} & \multicolumn{1}{c}{\textbf{Arena-Hard}} & \multicolumn{3}{c}{\textbf{MT-Bench}} \\ 
\cmidrule(lr){2-3} 
\cmidrule(lr){4-4}
\cmidrule(lr){5-7} 
\cmidrule(lr){8-9}
\cmidrule(lr){10-10}
\cmidrule(lr){11-13} 
& {\footnotesize \bf LC (\%)} & {\footnotesize \bf WR (\%)} & {\footnotesize \bf WR (\%)} & {\footnotesize \bf Turn-1} & {\footnotesize \bf Turn-2} & {\footnotesize \bf Avg.} & {\footnotesize \bf LC (\%)} & {\footnotesize\bf WR (\%)} & {\footnotesize \bf WR (\%)} & {\footnotesize \bf Turn-1} & {\footnotesize \bf Turn-2} & {\footnotesize \bf Avg.} \\
\midrule
DPO &  9.71 & 6.27 & 2.9 & 6.20 & \textbf{5.38} & \textbf{5.79} & 24.14 & 16.71 & \textbf{14.4} & 6.28 & 5.42 & 5.86 \\
DPO$_{\textnormal{clean}}$ & 9.41 & 6.52 & 3.0 & 6.18 & 5.22 & 5.70  & 23.89 & 16.15 & 14.2 & 6.11 & 5.34 & 5.73  \\
\midrule
DPO$_{\textnormal{clean}}$+ComPO & \textbf{11.66} & \textbf{6.55} & \textbf{3.2} & \textbf{6.22} & 5.32 & 5.77 & \textbf{26.17} & \textbf{18.32} & 10.5 & \textbf{7.78} & \textbf{7.63} & \textbf{7.69} \\

\midrule[.7pt]
\multirow{3}{*}{\textbf{Method}} & \multicolumn{6}{c}{\textbf{Llama-3-Base-8B}} & \multicolumn{6}{c}{\textbf{Llama-3-Instruct-8B}} \\ 
\cmidrule(lr){2-7}
\cmidrule(lr){8-13}
& \multicolumn{2}{c}{\textbf{AlpacaEval 2}} & \multicolumn{1}{c}{\textbf{Arena-Hard}} & \multicolumn{3}{c}{\textbf{MT-Bench}} & \multicolumn{2}{c}{\textbf{AlpacaEval 2}} & \multicolumn{1}{c}{\textbf{Arena-Hard}} & \multicolumn{3}{c}{\textbf{MT-Bench}} \\ 
\cmidrule(lr){2-3}
\cmidrule(lr){4-4}
\cmidrule(lr){5-7}
\cmidrule(lr){8-9}
\cmidrule(lr){10-10}
\cmidrule(lr){11-13}
& {\footnotesize \bf LC (\%)} & {\footnotesize \bf WR (\%)} & {\footnotesize \bf WR (\%)} & {\footnotesize \bf Turn-1} & {\footnotesize \bf Turn-2} & {\footnotesize \bf Avg.} & {\footnotesize \bf LC (\%)} & {\footnotesize\bf WR (\%)} & {\footnotesize \bf WR (\%)} & {\footnotesize \bf Turn-1} & {\footnotesize \bf Turn-2} & {\footnotesize \bf Avg.} \\
\midrule
DPO & 4.14 & 10.43 & \textbf{12.1} & 6.61 & 5.85 & 6.23 & 32.59 & 31.99 & 22.9 & 8.30 & 7.55 & 7.93 \\
DPO$_{\textnormal{clean}}$ & 4.28 & 9.81 & 12.0 & \textbf{6.64} & 6.01 & 6.33 & 32.92 & 32.42 & 22.9 & 8.26 & 7.63 & 7.94 \\
\midrule
DPO$_{\textnormal{clean}}$+ComPO & \textbf{5.39} & \textbf{10.93} & \textbf{12.1} & 6.60 & \textbf{6.28} & \textbf{6.44} & \textbf{35.79} & \textbf{35.03} & \textbf{23.1} & \textbf{8.39} & \textbf{7.71} & \textbf{8.05} \\
\bottomrule
\end{tabular}
} 
\label{tab:main}
\vspace{-1em}
\end{table}

\subsection{Augmenting DPO and SimPO}
We show that our method can utilize the information contained in noisy preference pairs to improve DPO and SimPO, especially in terms of LC and with the strong model. 

\textbf{DPO.} We separately train DPO and DPO$_{\textnormal{clean}}$ + ComPO. Table~\ref{tab:main} presents the performance in terms of both length-controlled win rate (LC) and win rate (WR). In addition to the best result achieved by DPO$_{\textnormal{clean}}$ + ComPO, we present the \textit{average performance} over 5 consecutive runs for all models and benchmarks in Table~\ref{tab:main_full} (see Appendix~\ref{app:additional}). The objective of doing this to evidence the robustness of our method in effectively leveraging noisy preference data pairs.

We have several interesting findings. First of all, DPO$_{\textnormal{clean}}$ does not always outperform DPO but could be better when the model is strong (e.g., Llama-3-Instruct-8B). This is possibly because our metric based log-likelihood margin is too simple to capture the similarity between preferred and dispreferred response, demonstrating the superiority of the CHES score~\cite{Razin-2025-Unintentional}. Nonetheless, our metric is easy to compute and our experimental results show that, despite weaker metric, our method utilizes the noisy pairs to improve the performance. Second, the improvement is large in terms of LC which accounts for admirable conciseness for the responses generated by DPO$_{\textnormal{clean}}$+ComPO. In other words, our method can alleviate the issue of verbosity which can be partially attributed to the presence of noisy pairs~\citep{Gao-2023-Scaling, Dubois-2023-Alpacafarm, Park-2024-Disentangling}. Thirdly, we remark that ComPO is only run with 100 noisy pairs but has achieved the consistent performance across most of benchmarks and models. This demonstrates the potential value of noisy pairs and our method in the context of aligning the LLMs with human preferences. 

We also observe that DPO can outperform DPO$_{\textnormal{clean}}$+ComPO by a large margin on Arena‑Hard for Mistral‑Instruct‑7B and the performance on Arena‑Hard for Llama‑Base‑8B and Llama‑Instruct‑8B are also indistinguishable. This is possibly because Arena-Hard favors longer
generations due to the absence of a length penalty in its evaluation (i.e., WR rather than LC)~\citep{Meng-2024-SimPO}. We report the average response length for Mistral‑Instruct‑7B and confirm this possibility; indeed, the average length is $513$ for DPO and $468$ with DPO$_{\textnormal{clean}}$+ComPO. In other words, our method's ability to alleviate the issue of verbosity leads to worse performance on Arena‑Hard compared to DPO.  

\textbf{SimPO.} Extending the compatibility of ComPO in augmenting DPO variants assists the community in advancing the existing directed alignment methods. Here, we focus on SimPO~\citep{Meng-2024-SimPO} and directly train on the well-tuned existing SimPO checkpoints. Table~\ref{tab:main_simpo} presents the performance in terms of both LC and WR, where SimPO + ComPO consistently outperforms SimPO across all models and benchmarks. Notably, the improvement is larger on both AlpacaEval 2 and Arena-Hard compared to DPO$_{\textnormal{clean}}$ + ComPO over DPO, demonstrating the superior compatibility of ComPO in augmenting SimPO. It is also worth remarking that SimPO$_{\textnormal{clean}}$ + ComPO achieves the consistent improvement on Arena-Hard in terms of WR, highlighting that SimPO and SimPO + ComPO generate the concise responses and ComPO further augments SimPO in terms of the quality of generated responses. 
\begin{table}[!t] \small
\caption{\footnotesize{Direct augmentation results on SimPO over different models and benchmarks.}}
\centering
\resizebox{0.75\textwidth}{!}{
\begin{tabular}{c|ccccccc}
\toprule
\multirow{2}{*}{\textbf{Model}} & \multirow{2}{*}{\textbf{Method}} & \multicolumn{2}{c}{\textbf{AlpacaEval 2}} & \multicolumn{1}{c}{\textbf{Arena-Hard}} & \multicolumn{3}{c}{\textbf{MT-Bench}} \\
\cmidrule(lr){3-4}\cmidrule(lr){5-5}\cmidrule(lr){6-8}
& & {\scriptsize \bf LC (\%)} & {\scriptsize \bf WR (\%)} & {\scriptsize \bf WR (\%)} & {\scriptsize \bf Turn-1} & {\scriptsize \bf Turn-2} & {\scriptsize \bf Avg.} \\
\midrule
\multirow{2}{*}{\textbf{Mistral-Instruct-7B}} & SimPO & 40.22 & 41.18 & 20.8 & \textbf{7.94} & 7.31 & 7.62 \\
& SimPO + ComPO & \textbf{42.27} & \textbf{43.17} & \textbf{22.0} & 7.83 & \textbf{7.46} & \textbf{7.64} \\
\midrule
\multirow{2}{*}{\textbf{Llama-3-Instruct-8B}} & SimPO & 48.71 & 43.66 & 36.3 & 7.91 & 7.42 & 7.66 \\
& SimPO + ComPO & \textbf{49.53} & \textbf{45.03} & \textbf{37.3} & \textbf{7.94} & \textbf{7.45} & \textbf{7.70} \\
\midrule
\multirow{2}{*}{\textbf{Gemma-2-it-9B}} & SimPO & 60.36 & 55.59 & \textbf{61.1} & \textbf{9.07} & 8.47 & 8.77 \\
& SimPO + ComPO & \textbf{62.42} & \textbf{57.20} & \textbf{61.1} & 8.99 & \textbf{8.58} & \textbf{8.79} \\
\bottomrule
\end{tabular}
}
\label{tab:main_simpo}
\vspace{-1em}
\end{table}

\setlength{\tabcolsep}{2pt}
\begin{table*}[!t] \small
\centering 
\caption{\footnotesize{The log-likelihood for preferred and dispreferred responses for 3 independent trials with $\gamma \in \{0.1, 1\}$ and the default values for all of other parameters. Each cell gives a pair of log-likelihood for preferred and dispreferred responses $(\log\pi_\theta(\y^+ | \x), \log\pi_\theta(\y^- | \x))$ after one trail of training. The results are indeed different since the perturbations $\{\z_i\}_{1 \leq i \leq m}$ are different for Trial 1, Trial 2 and Trial 3. However, we find that the log-likelihood for preferred response increase and the log-likelihood for dispreferred response decrease. }}
\label{table:likelihood_displacement}
\centering
\begin{tabular}{@{}cccc@{}}
\toprule
\multicolumn{4}{c}{\textbf{Llama-3-Instruct-8B} \ $(\log \pi_\theta(\y^+ | \x), \log\pi_\theta(\y^- | \x))=(-46.761, -47.410)$} \\ \midrule
$\gamma$ & Trial 1 & Trial 2 & Trial 3 \\ \midrule
0.1 & $(-46.744,\,-47.411)$ & $(-46.760,\,-47.411)$ & $(-46.759,\,-47.410)$ \\
1 & $(-46.728,\,-47.520)$ & $(-46.743,\,-47.525)$ & $(-46.753,\,-47.517)$ \\
\addlinespace[0.1em]
\midrule
\multicolumn{4}{c}{\textbf{Gemma-2-it-9B} \ $(\log\pi_\theta(\y^+ | \x), \log\pi_\theta(\y^- | \x))=(-133.122, -134.557)$} \\ \midrule
$\gamma$ & Trial 1 & Trial 2 & Trial 3 \\ \midrule
0.1 & $(-133.122,\,-134.557)$ & $(-133.122,\,-134.557)$ & $(-133.121,\,-134.557)$ \\
1 & $(-133.059,\,-134.562)$ & $(-133.122,\,-134.564)$ & $(-133.112,\,-134.565)$ \\
\bottomrule
\end{tabular}
\vspace{-1em}
\end{table*}

\begin{figure*}[!t]
\centering
\includegraphics[width=\textwidth]{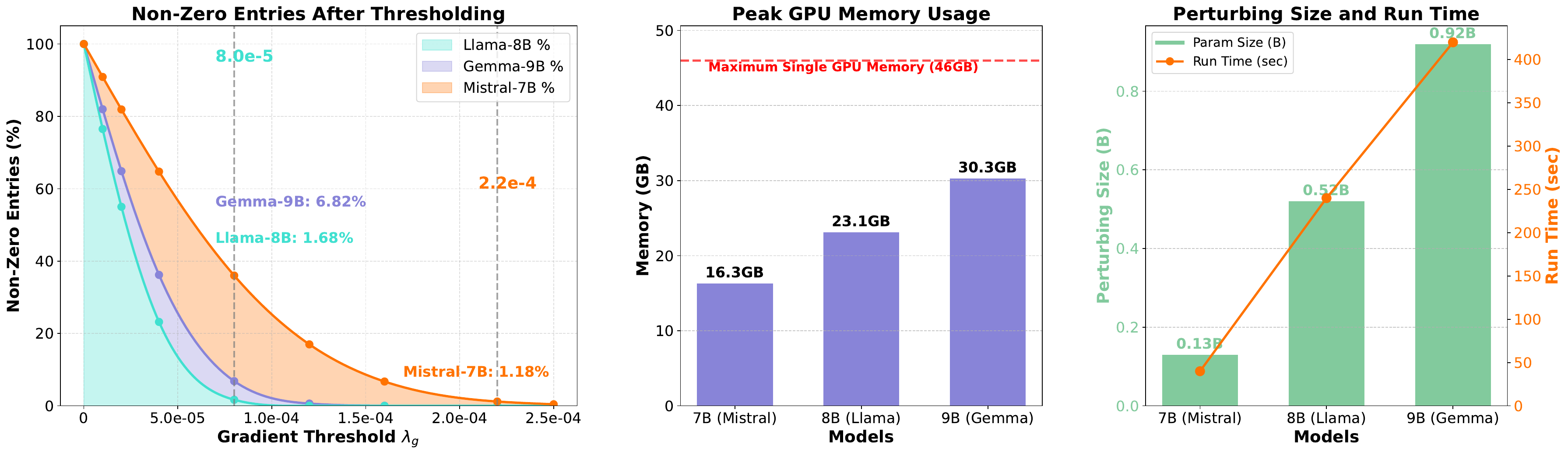}
\caption{\footnotesize{(Left) Percentage of non-zero entries in the final gradient across different gradient entry threshold $\lambda_g$; (Middle) Peak GPU memory usage across three models used in all experiments; (Right) Size of parameter space (output layer) in the comparison oracle perturbations and the run time for completing 600 perturbations using 30 NVIDIA A40 GPUs are shown.}} 
\label{fig:efficiency}
\vspace{-1em}
\end{figure*}

Table~\ref{table:likelihood_displacement} presents the log-likelihood for preferred and dispreferred responses for 3 independent trials with $\gamma \in \{0.1, 1\}$ and the default values for other parameters. We present the results for Llama-3-Instruct-8B and Gemma-2-it-9B and think it suffices to show that ComPO is effective. In contrast, the results for Mistral-7B is mixed possibly because the likelihood displacement can be caused by limited model capacity~\citep{Tajwar-2024-Preference}. We have two important findings. First of all, the comparison oracles defined in Eq.~\eqref{def:preference_CO} can return the informative signals for estimating the normalized gradients to both increase the likelihood for the preferred response and decrease the likelihood for the dispreferred response. For example, when the model is \textit{Llama-3-Instruct-8B} and $\gamma=1$, the log-likelihood for preferred and dispreferred responses after the first trail of training is $(-46.728, -47.520)$ while the initial log-likelihood for preferred and dispreferred responses is $(-46.761, -47.410)$. It is clear that $-46.728 > -46.761$ and $-47.520 < -47.410$. In other words, it can help alleviate the issue of likelihood displacement by utilizing the noisy preference pairs. Second, we impose the thresholds $\lambda_g, \lambda$ to retain only the significant gradient entries. This promotes stability by minimizing unnecessary changes to the original model parameters and effectiveness by allowing for using \textit{larger} stepsizes. However, we also find that too large stepsizes lead to unstable training which is consistent with the convergence guarantee obtained for the basic scheme (see Theorem~\ref{thm:main}). Can we develop a principle way to choose the stepsize in a fully adaptive manner? We leave the answers to future work.

\subsection{Ablation and scaling analysis}
Perturbations allow the oracle to explore different gradient directions, providing richer information as $m$ increases. To study this effect, we vary $m$ while keeping other parameters fixed; we conduct the experiments on Mistral-Instruct-7B (see Table~\ref{tab:perturb_m}). We observe that increasing $m$ improves WR and LC, confirming the convergence in Theorem~\ref{thm:main}, but at the cost of higher compute time. Importantly, peak memory usage remains unchanged, as ComPO does not store individual perturbation vectors -- only the running average gradient estimates are maintained (see Line 5 of Algorithm~\ref{alg:practical}).
\begin{table}[!t] 
\centering
\caption{\footnotesize{Effect of the number of perturbations $m$ on model performance. We report WR and LC on AlpacaEval~2; each entry includes the results of mean performance and standard deviation over 5 consecutive runs; the best run performance is shown in the parentheses.}} \label{tab:perturb_m}
\resizebox{0.95\columnwidth}{!}{%
\begin{tabular}{l|cccc}
\toprule
\textbf{Perturbation ($m$)} & \textbf{800} & \textbf{1600} & \textbf{3300} & \textbf{5400} \\
\midrule
\textbf{AlpacaEval 2-WR} \% & $17.32 \pm 0.86\ (17.94)$ & $17.50 \pm 0.65\ (18.32)$ & $19.21 \pm 0.58\ (\textbf{20.25})$ & $19.69 \pm 0.36\ (20.07)$ \\
\textbf{AlpacaEval~2-LC} \% & $24.72 \pm 1.02\ (25.12)$ & $25.02 \pm 0.91\ (26.17)$ & $25.91 \pm 0.95\ (27.14)$ & $26.49 \pm 0.81\ (\textbf{27.20})$ \\
\bottomrule
\end{tabular}%
}
\vspace{-.5em}
\end{table}
\begin{table}[!t]
\vspace{-.5em}
\centering
\caption{\footnotesize{Multi-layer perturbation improves performance. We report WR and LC on AlpacaEval~2 and WR on Arena-Hard; entries are mean $\pm$ std over 5 runs, with the best run in parentheses.}} \label{tab:multilayer_perturb}
\resizebox{0.95\columnwidth}{!}{%
\begin{tabular}{l|ccc}
\toprule
\textbf{Layers perturbed (\# params)} & \textbf{AlpacaEval~2-WR \%} & \textbf{AlpacaEval~2-LC \%} & \textbf{Arena-Hard (GPT 4.1)-WR \%} \\
\midrule
1 (0.13B) & $17.50 \pm 0.65\ (18.32)$ & $25.02 \pm 0.91\ (26.17)$ & $10.80 \pm 0.21\ (11.0)$ \\
3 (0.25B) & $18.19 \pm 0.81\ (19.38)$ & $26.00 \pm 0.89\ (27.09)$ & $11.26 \pm 0.36\ (11.7)$ \\
\bottomrule
\end{tabular}%
}
\vspace{-1em}
\end{table}

To demonstrate that ComPO scales beyond single-layer fine-tuning, we perturb three layers (the MLPs in layers 30–31 and the output layer) of Mistral-7B-Instruct while keeping all other settings fixed. Results are shown in Table~\ref{tab:multilayer_perturb}. Notably, we present the Arena-Hard result with a recent version under more robust GPT 4.1 judge. Perturbing more layers yields better performance by expanding the set of gradient directions. The peak GPU memory increases mildly from 16.3GB to 16.7GB, and running time per 600 perturbations takes 60 seconds (vs. 50 seconds), which we consider reasonable.

To ensure ComPO's scalability along with stable training, we applies a gradient entry threshold $\lambda_g$, which selectively updates only high-magnitude gradient entries from oracles. We conducted ablation analysis for $\lambda_g$ under $m=3300$ in Mistral-7B-Instruct training; results are presented in Table~\ref{tab:lambda_g}. We found that setting $\lambda_g$ to retain 1\%-5\% of entries offers the best trade-off between performance and stability. An excessively high threshold over-filters gradient information, whereas an excessively low threshold injects noise into the gradients, bringing instability. We also extend the analysis to 300 noisy pairs and observe consistent improvements on AlpacaEval2 and Arena‑Hard in Table~\ref{tab:noisy_pairs_scale}.  

\begin{table}[!t]
\centering
\caption{\footnotesize{Effect of gradient threshold $\lambda_g$. Results are WR and LC on AlpacaEval~2; entries are mean $\pm$ std over 5 runs, with the best run in parentheses.}} \label{tab:lambda_g}
\resizebox{\columnwidth}{!}{%
\begin{tabular}{l|ccccc}
\toprule
\textbf{\textbf{$\lambda_g$}} & \textbf{$0$} & \textbf{$4{\times}10^{-5}$} & \textbf{$1.8{\times}10^{-4}$} & \textbf{$2.2{\times}10^{-4}$} & \textbf{$2.5{\times}10^{-4}$} \\
\midrule
\textbf{Percentage of gradient entries updated} & $100\%$ & $63\%$ & $6\%$ & $1\%$ & $0.15\%$ \\
\textbf{AlpacaEval 2-WR \%} & $15.72 \pm 0.77\ (16.34)$ & $16.02 \pm 0.69\ (16.69)$ & $19.02 \pm 0.62\ (20.15)$ & $19.21 \pm 0.58\ (20.25)$ & $16.10 \pm 0.11\ (16.21)$ \\
\textbf{AlpacaEval 2-LC \% }& $23.42 \pm 1.03\ (24.28)$ & $24.01 \pm 0.91\ (25.10)$ & $26.06 \pm 0.81\ (27.27)$ & $25.91 \pm 0.95\ (27.14)$ & $23.82 \pm 0.23\ (24.00)$ \\
\bottomrule
\end{tabular}%
}
\vspace{-1em}
\end{table}
\begin{table}[!t]
\centering
\caption{\footnotesize{Results on scaling the number of noisy preference pairs used in the training.}} \label{tab:noisy_pairs_scale}
\resizebox{0.95\columnwidth}{!}{%
\centering
\begin{tabular}{l|ccc}
\toprule
\textbf{Number of noisy pairs} & \textbf{AlpacaEval 2-WR \%} & \textbf{AlpacaEval 2-LC \%} & \textbf{Arena-Hard (GPT 4.1)-WR \%} \\
\midrule
100 & $19.21 \pm 0.58\ (20.25)$ & $25.91 \pm 0.95\ (27.14)$ & $11.02 \pm 0.13\ (11.2)$ \\
300 & $20.07 \pm 0.99\ (21.35)$ & $26.28 \pm 0.81\ (27.59)$ & $11.76 \pm 0.30\ (12.1)$ \\
\bottomrule
\end{tabular}%
}
\vspace{-1em}
\end{table}

\subsection{Practical efficiency and compatibility}
While full fine-tuning (i.e., updating all model parameters) and LoRA-based fine-tuning~\citep{Hu-2022-Lora} are two common post-training approaches, we consider using a more lightweight yet effective alternative fine-tuning, which only updates a small portion of parameters in the output layer. Figure~\ref{fig:efficiency} (left) shows that the chosen value of $\lambda_g$ retains only about $1\%$ of the output layer parameters for Mistral-7B and Llama-3-8B. This indicates that fine-tuning a small portion of output layer parameters is sufficient for effective alignment. Specifically, only $0.0002\%$ of the total 7B parameters are updated for Mistral-7B, while the remaining parameters are kept frozen during ComPO training. 

This fine-tuning approach offers some advantages. First, it significantly reduces memory usage as only one extra output-layer vector needs to be saved and updated during each iteration. Figure~\ref{fig:efficiency} (middle) shows that we only require around 23GB memory for each A40 GPU to run ComPO for Llama-3-8B while the peak memory for running DPO and SimPO is 77GB and 69GB on H100 GPUs. Second, it significantly improves time efficiency by appeal to the favorable parallelization properties of collecting comparison oracle feedback and accumulating perturbation signals (see Algorithm~\ref{alg:practical}). For example, if one performs 600 perturbations using 30 A40 GPUs, each worker node only processes 20 perturbations, while the master node collects comparison oracle feedback along with accumulated perturbation signals from all worker nodes to compute the gradient estimator. Figure~\ref{fig:efficiency} (right) shows that the run time scales as a linear function of the size of perturbation parameter space. In addition, different models can have different structures and output layer sizes and we perturbed the complete \texttt{lm\_head} layers across all models in our experiment for consistency. 

We highlight that ComPO can effectively exploit noisy pairs and allow users with an alternative choice to run it directly on existing checkpoints with its noisy pairs. This is especially valuable if users do not have large memory GPUs but still want to finetune public checkpoints on the noisy pairs that are useful but cannot be effectively utilized by DPO. To directly illustrate this, we applied ComPO to DPO checkpoints without separating noisy and clean pairs and observed comparable improvements (with $m=3300$); results are shown in Table~\ref{tab:mistral_main} in Appendix~\ref{app:additional}. In practice, one could start with an existing, publicly available model that has already been aligned on a general, high-quality dataset. ComPO empowers a user to take that pre-aligned public model and further refine it using their own, potentially noisy and task-specific, preference data, using a more affordable GPU platform.

\section{Conclusion}\label{sec:conclu}
We propose a new zeroth-order preference alignment method based on comparison oracles and show that it can improve the performance of large language models (LLMs) using the noisy preference pairs that induce similar likelihood for preferred and dispreferred responses. Experimental results on multiple models and benchmarks show the effectiveness of our method to mitigate the issues of verbosity and likelihood displacement. This shows the importance of designing specialized methods for preference pairs with distinct likelihood margin, which complements the recent findings~\citep{Razin-2025-Unintentional}. Future directions include the extension of our method to other settings~\citep{Yuan-2024-Self, Xu-2024-DPO, Tajwar-2024-Preference, Guo-2024-Direct} and applications of our method to more challenging tasks, such as reasoning~\citep{Pang-2024-Iterative} and diffusion model alignment~\citep{Wallace-2024-Diffusion}. 

\section*{Acknowledgment}
We sincerely appreciate Buzz High Performance Computing (\hyperlink{https://www.buzzhpc.ai}{\texttt{https://www.buzzhpc.ai}}, \texttt{info@buzzhpc.ai}) for providing computational resources and support for this work.


\bibliographystyle{plainnat}
\bibliography{ref}

\begin{thebibliography}{108}
\providecommand{\natexlab}[1]{#1}
\providecommand{\url}[1]{\texttt{#1}}
\expandafter\ifx\csname urlstyle\endcsname\relax
  \providecommand{\doi}[1]{doi: #1}\else
  \providecommand{\doi}{doi: \begingroup \urlstyle{rm}\Url}\fi

\bibitem[Achiam et~al.(2023)Achiam, Adler, Agarwal, Ahmad, Akkaya, Aleman,
  Almeida, Altenschmidt, Altman, Anadkat, et~al.]{Achiam-2023-GPT}
J.~Achiam, S.~Adler, S.~Agarwal, L.~Ahmad, I.~Akkaya, F.~L. Aleman, D.~Almeida,
  J.~Altenschmidt, S.~Altman, S.~Anadkat, et~al.
\newblock {GPT}-4 technical report.
\newblock \emph{ArXiv Preprint: 2303.08774}, 2023.

\bibitem[Agarwal et~al.(2010)Agarwal, Dekel, and Xiao]{Agarwal-2010-Optimal}
A.~Agarwal, O.~Dekel, and L.~Xiao.
\newblock Optimal algorithms for online convex optimization with multi-point
  bandit feedback.
\newblock In \emph{COLT}, pages 28--40, 2010.

\bibitem[Akrour et~al.(2011)Akrour, Schoenauer, and
  Sebag]{Akrour-2011-Preference}
R.~Akrour, M.~Schoenauer, and M.~Sebag.
\newblock Preference-based policy learning.
\newblock In \emph{ECML PKDD}, pages 12--27, 2011.

\bibitem[Amini et~al.(2024)Amini, Vieira, and Cotterell]{Amini-2024-Direct}
A.~Amini, T.~Vieira, and R.~Cotterell.
\newblock Direct preference optimization with an offset.
\newblock In \emph{ACL}, pages 9954--9972, 2024.

\bibitem[Astudillo and Frazier(2020)]{Astudillo-2020-Multi}
R.~Astudillo and P.~Frazier.
\newblock Multi-attribute {B}ayesian optimization with interactive preference
  learning.
\newblock In \emph{AISTATS}, pages 4496--4507, 2020.

\bibitem[Azar et~al.(2024)Azar, Guo, Piot, Munos, Rowland, Valko, and
  Calandriello]{Azar-2024-General}
M.~G. Azar, Z.~Guo, B.~Piot, R.~Munos, M.~Rowland, M.~Valko, and
  D.~Calandriello.
\newblock A general theoretical paradigm to understand learning from human
  preferences.
\newblock In \emph{AISTATS}, pages 4447--4455, 2024.

\bibitem[Bai et~al.(2022)Bai, Jones, Ndousse, Askell, Chen, DasSarma, Drain,
  Fort, Ganguli, Henighan, et~al.]{Bai-2022-Training}
Y.~Bai, A.~Jones, K.~Ndousse, A.~Askell, A.~Chen, N.~DasSarma, D.~Drain,
  S.~Fort, D.~Ganguli, T.~Henighan, et~al.
\newblock Training a helpful and harmless assistant with reinforcement learning
  from human feedback.
\newblock \emph{ArXiv Preprint: 2204.05862}, 2022.

\bibitem[Boufounos and Baraniuk(2008)]{Boufounos-2008-Compressed}
P.~T. Boufounos and R.~G. Baraniuk.
\newblock 1-bit compressive sensing.
\newblock In \emph{CISS}, pages 16--21. IEEE, 2008.

\bibitem[Brown et~al.(2020)Brown, Mann, Ryder, Subbiah, Kaplan, Dhariwal,
  Neelakantan, Shyam, Sastry, Askell, et~al.]{Brown-2020-Language}
T.~B. Brown, B.~Mann, N.~Ryder, M.~Subbiah, J.~Kaplan, P.~Dhariwal,
  A.~Neelakantan, P.~Shyam, G.~Sastry, A.~Askell, et~al.
\newblock Language models are few-shot learners.
\newblock In \emph{NeurIPS}, pages 1877--1901, 2020.

\bibitem[Bubeck et~al.(2023)Bubeck, Chandrasekaran, Eldan, Gehrke, Horvitz,
  Kamar, Lee, Lee, Li, Lundberg, et~al.]{Bubeck-2023-Sparks}
S.~Bubeck, V.~Chandrasekaran, R.~Eldan, J.~Gehrke, E.~Horvitz, E.~Kamar,
  P.~Lee, Y.~T. Lee, Y.~Li, S.~Lundberg, et~al.
\newblock Sparks of artificial general intelligence: Early experiments with
  {GPT}-4.
\newblock \emph{ArXiv Preprint: 2303.12712}, 2023.

\bibitem[Busa-Fekete et~al.(2014)Busa-Fekete, Sz{\"o}r{\'e}nyi, Weng, Cheng,
  and H{\"u}llermeier]{Busa-2014-Preference}
R.~Busa-Fekete, B.~Sz{\"o}r{\'e}nyi, P.~Weng, W.~Cheng, and E.~H{\"u}llermeier.
\newblock Preference-based reinforcement learning: Evolutionary direct policy
  search using a preference-based racing algorithm.
\newblock \emph{Machine learning}, 97:\penalty0 327--351, 2014.

\bibitem[Cai et~al.(2022{\natexlab{a}})Cai, McKenzie, Yin, and
  Zhang]{Cai-2022-One}
H.~Cai, D.~McKenzie, W.~Yin, and Z.~Zhang.
\newblock A one-bit, comparison-based gradient estimator.
\newblock \emph{Applied and Computational Harmonic Analysis}, 60:\penalty0
  242--266, 2022{\natexlab{a}}.

\bibitem[Cai et~al.(2022{\natexlab{b}})Cai, McKenzie, Yin, and
  Zhang]{Cai-2022-Zeroth}
H.~Cai, D.~McKenzie, W.~Yin, and Z.~Zhang.
\newblock Zeroth-order regularized optimization ({ZORO}): Approximately sparse
  gradients and adaptive sampling.
\newblock \emph{SIAM Journal on Optimization}, 32\penalty0 (2):\penalty0
  687--714, 2022{\natexlab{b}}.

\bibitem[Casper et~al.(2023)Casper, Davies, Shi, Gilbert, Scheurer, Rando,
  Freedman, Korbak, Lindner, Freire, Wang, Marks, Ségerie, Carroll, Peng,
  Christoffersen, Damani, Slocum, Anwar, Siththaranjan, Nadeau, Michaud, Pfau,
  Krasheninnikov, Chen, Langosco, Hase, Biyik, Dragan, Krueger, Sadigh, and
  Hadfield-Menell]{Casper-2023-Open}
S.~Casper, X.~Davies, C.~Shi, T.~K. Gilbert, J.~Scheurer, J.~Rando,
  R.~Freedman, T.~Korbak, D.~Lindner, P.~Freire, T.~T. Wang, S.~Marks, C-R.
  Ségerie, M.~Carroll, A.~Peng, P.~J.~K. Christoffersen, M.~Damani, S.~Slocum,
  U.~Anwar, A.~Siththaranjan, M.~Nadeau, E.~J. Michaud, J.~Pfau,
  D.~Krasheninnikov, X.~Chen, L.~Langosco, P.~Hase, E.~Biyik, A.~D. Dragan,
  D.~Krueger, D.~Sadigh, and D.~Hadfield-Menell.
\newblock Open problems and fundamental limitations of reinforcement learning
  from human feedback.
\newblock \emph{Transactions on Machine Learning Research}, 2023.
\newblock URL \url{https://openreview.net/forum?id=bx24KpJ4Eb}.

\bibitem[Chen et~al.(2024)Chen, He, Yuan, Cui, Su, and Zhu]{Chen-2024-Noise}
H.~Chen, G.~He, L.~Yuan, G.~Cui, H.~Su, and J.~Zhu.
\newblock Noise contrastive alignment of language models with explicit rewards.
\newblock In \emph{NeurIPS}, pages 117784--117812, 2024.

\bibitem[Chen et~al.(2025)Chen, Zhao, Lam, Yao, and Tang]{Chen-2025-MallowsPO}
H.~Chen, H.~Zhao, H.~Lam, D.~Yao, and W.~Tang.
\newblock Mallows{PO}: Fine-tune your {LLM} with preference dispersions.
\newblock In \emph{ICLR}, 2025.
\newblock URL \url{https://openreview.net/forum?id=d8cnezVcaW}.

\bibitem[Chen et~al.(2019)Chen, Liu, Xu, Li, Lin, Hong, and Cox]{Chen-2019-Zo}
X.~Chen, S.~Liu, K.~Xu, X.~Li, X.~Lin, M.~Hong, and D.~Cox.
\newblock {ZO}-{A}da{MM}: zeroth-order adaptive momentum method for black-box
  optimization.
\newblock In \emph{NeurIPS}, pages 7204--7215, 2019.

\bibitem[Cheng et~al.(2020)Cheng, Singh, Chen, Chen, Liu, and
  Hsieh]{Cheng-2020-SignOPT}
M.~Cheng, S.~Singh, P.~H. Chen, P-Y. Chen, S.~Liu, and C-J. Hsieh.
\newblock Sign-{OPT}: A query-efficient hard-label adversarial attack.
\newblock In \emph{ICLR}, 2020.
\newblock URL \url{https://openreview.net/forum?id=SklTQCNtvS}.

\bibitem[Choromanski et~al.(2019)Choromanski, Pacchiano, Parker-Holder, Tang,
  and Sindhwani]{Choromanski-2019-Complexity}
K.~Choromanski, A.~Pacchiano, J.~Parker-Holder, Y.~Tang, and V.~Sindhwani.
\newblock From complexity to simplicity: {A}daptive {ES}-{A}ctive subspaces for
  blackbox optimization.
\newblock In \emph{NeurIPS}, pages 10299--10309, 2019.

\bibitem[Chowdhery et~al.(2023)Chowdhery, Narang, Devlin, Bosma, Mishra,
  Roberts, Barham, Chung, Sutton, Gehrmann, et~al.]{Chowdhery-2023-Palm}
A.~Chowdhery, S.~Narang, J.~Devlin, M.~Bosma, G.~Mishra, A.~Roberts, P.~Barham,
  H.~W. Chung, C.~Sutton, S.~Gehrmann, et~al.
\newblock Palm: Scaling language modeling with pathways.
\newblock \emph{Journal of Machine Learning Research}, 24\penalty0
  (240):\penalty0 1--113, 2023.

\bibitem[Christiano et~al.(2017)Christiano, Leike, Brown, Martic, Legg, and
  Amodei]{Christiano-2017-Deep}
P.~F. Christiano, J.~Leike, T.~B. Brown, M.~Martic, S.~Legg, and D.~Amodei.
\newblock Deep reinforcement learning from human preferences.
\newblock In \emph{NeurIPS}, pages 4302--4310, 2017.

\bibitem[Conti et~al.(2018)Conti, Madhavan, Such, Lehman, Stanley, and
  Clune]{Conti-2018-Improving}
E.~Conti, V.~Madhavan, F.~P. Such, J.~Lehman, K.~O. Stanley, and J.~Clune.
\newblock Improving exploration in evolution strategies for deep reinforcement
  learning via a population of novelty-seeking agents.
\newblock In \emph{NeurIPS}, pages 5032--5043, 2018.

\bibitem[Cui et~al.(2024)Cui, Yuan, Ding, Yao, He, Zhu, Ni, Xie, Xie, Lin, Liu,
  and Sun]{Cui-2024-Ultrafeedback}
G.~Cui, L.~Yuan, N.~Ding, G.~Yao, B.~He, W.~Zhu, Y.~Ni, G.~Xie, R.~Xie, Y.~Lin,
  Z.~Liu, and M.~Sun.
\newblock Ultrafeedback: Boosting language models with scaled {AI} feedback.
\newblock In \emph{ICML}, pages 9722--9744, 2024.

\bibitem[Ding and Zhou(2018)]{Ding-2018-Preference}
Y-X. Ding and Z-H. Zhou.
\newblock Preference based adaptation for learning objectives.
\newblock In \emph{NeurIPS}, pages 7839--7848, 2018.

\bibitem[Dong et~al.(2023)Dong, Xiong, Goyal, Zhang, Chow, Pan, Diao, Zhang,
  Shum, and Zhang]{Dong-2023-RAFT}
H.~Dong, W.~Xiong, D.~Goyal, Y.~Zhang, W.~Chow, R.~Pan, S.~Diao, J.~Zhang,
  K.~Shum, and T.~Zhang.
\newblock {RAFT}: Reward ranked fine-tuning for generative foundation model
  alignment.
\newblock \emph{Transactions on Machine Learning Research}, 2023.
\newblock URL \url{https://openreview.net/forum?id=m7p5O7zblY}.

\bibitem[Dong et~al.(2024)Dong, Xiong, Pang, Wang, Zhao, Zhou, Jiang, Sahoo,
  Xiong, and Zhang]{Dong-2024-RLHF}
H.~Dong, W.~Xiong, B.~Pang, H.~Wang, H.~Zhao, Y.~Zhou, N.~Jiang, D.~Sahoo,
  C.~Xiong, and T.~Zhang.
\newblock {RLHF} workflow: From reward modeling to online {RLHF}.
\newblock \emph{Transactions on Machine Learning Research}, 2024.
\newblock URL \url{https://openreview.net/forum?id=a13aYUU9eU}.

\bibitem[Dubois et~al.(2023)Dubois, Li, Taori, Zhang, Gulrajani, Ba, Guestrin,
  Liang, and Hashimoto]{Dubois-2023-Alpacafarm}
Y.~Dubois, X.~Li, R.~Taori, T.~Zhang, I.~Gulrajani, J.~Ba, C.~Guestrin,
  P.~Liang, and T.~B. Hashimoto.
\newblock Alpacafarm: A simulation framework for methods that learn from human
  feedback.
\newblock In \emph{NeurIPS}, pages 30039--30069, 2023.

\bibitem[Dubois et~al.(2024)Dubois, Liang, and Hashimoto]{Dubois-2024-Length}
Y.~Dubois, P.~Liang, and T.~Hashimoto.
\newblock Length-controlled {A}lpaca{E}val: A simple debiasing of automatic
  evaluators.
\newblock In \emph{COLM}, 2024.
\newblock URL \url{https://openreview.net/forum?id=CybBmzWBX0}.

\bibitem[Duchi et~al.(2015)Duchi, Jordan, Wainwright, and
  Wibisono]{Duchi-2015-Optimal}
J.~C. Duchi, M.~I. Jordan, M.~J. Wainwright, and A.~Wibisono.
\newblock Optimal rates for zero-order convex optimization: The power of two
  function evaluations.
\newblock \emph{IEEE Transactions on Information Theory}, 61\penalty0
  (5):\penalty0 2788--2806, 2015.

\bibitem[Ethayarajh et~al.(2024)Ethayarajh, Xu, Muennighoff, Jurafsky, and
  Kiela]{Ethayarajh-2024-Model}
K.~Ethayarajh, W.~Xu, N.~Muennighoff, D.~Jurafsky, and D.~Kiela.
\newblock Model alignment as prospect theoretic optimization.
\newblock In \emph{ICML}, pages 12634--12651, 2024.

\bibitem[Flaxman et~al.(2005)Flaxman, Kalai, and McMahan]{Flaxman-2005-Online}
A.~D. Flaxman, A.~T. Kalai, and H.~B. McMahan.
\newblock Online convex optimization in the bandit setting: Gradient descent
  without a gradient.
\newblock In \emph{SODA}, pages 385--394, 2005.

\bibitem[Gao et~al.(2023)Gao, Schulman, and Hilton]{Gao-2023-Scaling}
L.~Gao, J.~Schulman, and J.~Hilton.
\newblock Scaling laws for reward model overoptimization.
\newblock In \emph{ICML}, pages 10835--10866, 2023.

\bibitem[Gehring et~al.(2017)Gehring, Auli, Grangier, Yarats, and
  Dauphin]{Gehring-2017-Convolutional}
J.~Gehring, M.~Auli, D.~Grangier, D.~Yarats, and Y.~N. Dauphin.
\newblock Convolutional sequence to sequence learning.
\newblock In \emph{ICML}, pages 1243--1252, 2017.

\bibitem[Ghadimi and Lan(2013)]{Ghadimi-2013-Stochastic}
S.~Ghadimi and G.~Lan.
\newblock Stochastic first-and zeroth-order methods for nonconvex stochastic
  programming.
\newblock \emph{SIAM Journal on Optimization}, 23\penalty0 (4):\penalty0
  2341--2368, 2013.

\bibitem[Golovin et~al.(2020)Golovin, Karro, Kochanski, Lee, Song, and
  Zhang]{Golovin-2020-Gradientless}
D.~Golovin, J.~Karro, G.~Kochanski, C.~Lee, X.~Song, and Q.~Zhang.
\newblock Gradientless descent: High-dimensional zeroth-order optimization.
\newblock In \emph{ICLR}, 2020.
\newblock URL \url{https://openreview.net/forum?id=Skep6TVYDB}.

\bibitem[Guo et~al.(2025)Guo, Li, Qiu, Wu, and Wang]{Guo-2025-Role}
J.~Guo, Z.~Li, J.~Qiu, Y.~Wu, and M.~Wang.
\newblock On the role of preference variance in preference optimization.
\newblock \emph{ArXiv Preprint: 2510.13022}, 2025.

\bibitem[Guo et~al.(2024)Guo, Zhang, Liu, Liu, Khalman, Llinares, Rame,
  Mesnard, Zhao, Piot, et~al.]{Guo-2024-Direct}
S.~Guo, B.~Zhang, T.~Liu, T.~Liu, M.~Khalman, F.~Llinares, A.~Rame, T.~Mesnard,
  Y.~Zhao, B.~Piot, et~al.
\newblock Direct language model alignment from online {AI} feedback.
\newblock \emph{arXiv preprint arXiv:2402.04792}, 2024.

\bibitem[Hong et~al.(2024)Hong, Lee, and Thorne]{Hong-2024-ORPO}
J.~Hong, N.~Lee, and J.~Thorne.
\newblock {ORPO}: Monolithic preference optimization without reference model.
\newblock In \emph{EMNLP}, pages 11170--11189, 2024.

\bibitem[Hu et~al.(2022)Hu, Shen, Wallis, Allen-Zhu, Li, Wang, Wang, and
  Chen]{Hu-2022-Lora}
E.~J. Hu, Y.~Shen, P.~Wallis, Z.~Allen-Zhu, Y.~Li, S.~Wang, L.~Wang, and
  W.~Chen.
\newblock Lo{RA}: Low-rank adaptation of large language models.
\newblock In \emph{ICLR}, 2022.
\newblock URL \url{https://openreview.net/forum?id=nZeVKeeFYf9}.

\bibitem[Huang et~al.(2022)Huang, Gao, Pei, and Huang]{Huang-2022-Accelerated}
F.~Huang, S.~Gao, J.~Pei, and H.~Huang.
\newblock Accelerated zeroth-order and first-order momentum methods from mini
  to minimax optimization.
\newblock \emph{Journal of Machine Learning Research}, 23\penalty0
  (36):\penalty0 1--70, 2022.

\bibitem[Jamieson et~al.(2012)Jamieson, Nowak, and Recht]{Jamieson-2012-Query}
K.~G. Jamieson, R.~Nowak, and B.~Recht.
\newblock Query complexity of derivative-free optimization.
\newblock In \emph{NeurIPS}, pages 2672--2680, 2012.

\bibitem[Ji et~al.(2023)Ji, Liu, Dai, Pan, Zhang, Bian, Chen, Sun, Wang, and
  Yang]{Ji-2023-Beavertails}
J.~Ji, M.~Liu, J.~Dai, X.~Pan, C.~Zhang, C.~Bian, B.~Chen, R.~Sun, Y.~Wang, and
  Y.~Yang.
\newblock Beavertails: Towards improved safety alignment of {LLM} via a
  human-preference dataset.
\newblock In \emph{NeurIPS Dataset and Benchmark Track}, 2023.
\newblock URL \url{https://openreview.net/forum?id=g0QovXbFw3}.

\bibitem[Ji et~al.(2019)Ji, Wang, Zhou, and Liang]{Ji-2019-Improved}
K.~Ji, Z.~Wang, Y.~Zhou, and Y.~Liang.
\newblock Improved zeroth-order variance reduced algorithms and analysis for
  nonconvex optimization.
\newblock In \emph{ICML}, pages 3100--3109, 2019.

\bibitem[Kabir et~al.(2024)Kabir, Udo-Imeh, Kou, and Zhang]{Kabir-2024-Stack}
S.~Kabir, D.~N. Udo-Imeh, B.~Kou, and T.~Zhang.
\newblock Is stack overflow obsolete? an empirical study of the characteristics
  of {ChatGPT} answers to stack overflow questions.
\newblock In \emph{CHI}, pages 1--17, 2024.

\bibitem[Kim et~al.(2024)Kim, Seo, Liu, Shin, and Lee]{Kim-2024-Margin}
K.~Kim, A.~Seo, H.~Liu, J.~Shin, and K.~Lee.
\newblock Margin matching preference optimization: Enhanced model alignment
  with granular feedback.
\newblock In \emph{EMNLP}, pages 13554--13570, 2024.

\bibitem[Kornowski and Shamir(2024)]{Kornowski-2024-Algorithm}
G.~Kornowski and O.~Shamir.
\newblock An algorithm with optimal dimension-dependence for zero-order
  nonsmooth nonconvex stochastic optimization.
\newblock \emph{Journal of Machine Learning Research}, 25\penalty0
  (122):\penalty0 1--14, 2024.

\bibitem[Kumagai(2017)]{Kumagai-2017-Regret}
W.~Kumagai.
\newblock Regret analysis for continuous dueling bandit.
\newblock In \emph{NeurIPS}, pages 1488--1497, 2017.

\bibitem[Li et~al.(2024)Li, Chiang, Frick, Dunlap, Wu, Zhu, Gonzalez, and
  Stoica]{Li-2024-Crowdsourced}
T.~Li, W-L. Chiang, E.~Frick, L.~Dunlap, T.~Wu, B.~Zhu, J.~E. Gonzalez, and
  I.~Stoica.
\newblock From crowdsourced data to high-quality benchmarks: Arena-hard and
  benchbuilder pipeline.
\newblock \emph{ArXiv Preprint: 2406.11939}, 2024.

\bibitem[Li et~al.(2023)Li, Zhang, Dubois, Taori, Gulrajani, Guestrin, Liang,
  and Hashimoto]{Li-2023-AlpacaEval}
X.~Li, T.~Zhang, Y.~Dubois, R.~Taori, I.~Gulrajani, C.~Guestrin, P.~Liang, and
  T.~B. Hashimoto.
\newblock Alpaca{E}val: An automatic evaluator of instruction-following models.
\newblock \url{https://github.com/tatsu-lab/alpaca_eval}, 5 2023.

\bibitem[Lian et~al.(2016)Lian, Zhang, Hsieh, Huang, and
  Liu]{Lian-2016-Comprehensive}
X.~Lian, H.~Zhang, C-J. Hsieh, Y.~Huang, and J.~Liu.
\newblock A comprehensive linear speedup analysis for asynchronous stochastic
  parallel optimization from zeroth-order to first-order.
\newblock In \emph{NeurIPS}, pages 3062--3070, 2016.

\bibitem[Lin et~al.(2022{\natexlab{a}})Lin, Hilton, and
  Evans]{Lin-2022-TruthfulQA}
S.~Lin, J.~Hilton, and O.~Evans.
\newblock Truthful{QA}: Measuring how models mimic human falsehoods.
\newblock In \emph{ACL}, 2022{\natexlab{a}}.

\bibitem[Lin et~al.(2022{\natexlab{b}})Lin, Zheng, and
  Jordan]{Lin-2022-Gradient}
T.~Lin, Z.~Zheng, and M.~I. Jordan.
\newblock Gradient-free methods for deterministic and stochastic nonsmooth
  nonconvex optimization.
\newblock In \emph{NeurIPS}, pages 26160--26175, 2022{\natexlab{b}}.

\bibitem[Lin et~al.(2022{\natexlab{c}})Lin, Astudillo, Frazier, and
  Bakshy]{Lin-2022-Preference}
Z.~J. Lin, R.~Astudillo, P.~Frazier, and E.~Bakshy.
\newblock Preference exploration for efficient {B}ayesian optimization with
  multiple outcomes.
\newblock In \emph{AISTATS}, pages 4235--4258, 2022{\natexlab{c}}.

\bibitem[Liu et~al.(2018)Liu, Kailkhura, Chen, Ting, Chang, and
  Amini]{Liu-2018-Zeroth}
S.~Liu, B.~Kailkhura, P-Y. Chen, P.~Ting, S.~Chang, and L.~Amini.
\newblock Zeroth-order stochastic variance reduction for nonconvex
  optimization.
\newblock In \emph{NeurIPS}, pages 3731--3741, 2018.

\bibitem[Liu et~al.(2024{\natexlab{a}})Liu, Zhao, Joshi, Khalman, Saleh, Liu,
  and Liu]{Liu-2024-Statistical}
T.~Liu, Y.~Zhao, R.~Joshi, M.~Khalman, M.~Saleh, P.~J. Liu, and J.~Liu.
\newblock Statistical rejection sampling improves preference optimization.
\newblock In \emph{ICLR}, 2024{\natexlab{a}}.
\newblock URL \url{https://openreview.net/forum?id=xbjSwwrQOe}.

\bibitem[Liu et~al.(2025)Liu, Qin, Wu, Shen, Khalman, Joshi, Zhao, Saleh,
  Baumgartner, Liu, et~al.]{Liu-2025-LiPO}
T.~Liu, Z.~Qin, J.~Wu, J.~Shen, M.~Khalman, R.~Joshi, Y.~Zhao, M.~Saleh,
  S.~Baumgartner, J.~Liu, et~al.
\newblock Li{PO}: Listwise preference optimization through learning-to-rank.
\newblock In \emph{NAACL}, page To appear, 2025.

\bibitem[Liu et~al.(2024{\natexlab{b}})Liu, Lu, Zhang, Liu, Guo, Yang,
  Blanchet, and Wang]{Liu-2024-Provably}
Z.~Liu, M.~Lu, S.~Zhang, B.~Liu, H.~Guo, Y.~Yang, J.~Blanchet, and Z.~Wang.
\newblock Provably mitigating overoptimization in {RLHF}: Your {SFT} loss is
  implicitly an adversarial regularizer.
\newblock In \emph{NeurIPS}, pages 138663--138697, 2024{\natexlab{b}}.

\bibitem[Malladi et~al.(2023)Malladi, Gao, Nichani, Damian, Lee, Chen, and
  Arora]{Malladi-2023-Fine}
S.~Malladi, T.~Gao, E.~Nichani, A.~Damian, J.~D. Lee, D.~Chen, and S.~Arora.
\newblock Fine-tuning language models with just forward passes.
\newblock In \emph{NeurIPS}, pages 53038--53075, 2023.

\bibitem[Matsui et~al.(2017)Matsui, Kumagai, and
  Kanamori]{Matsui-2017-Parallel}
K.~Matsui, W.~Kumagai, and T.~Kanamori.
\newblock Parallel distributed block coordinate descent methods based on
  pairwise comparison oracle.
\newblock \emph{Journal of Global Optimization}, 69:\penalty0 1--21, 2017.

\bibitem[Meng et~al.(2024)Meng, Xia, and Chen]{Meng-2024-SimPO}
Y.~Meng, M.~Xia, and D.~Chen.
\newblock Sim{PO}: Simple preference optimization with a reference-free reward.
\newblock In \emph{NeurIPS}, pages 124198--124235, 2024.

\bibitem[Merity et~al.(2018)Merity, Keskar, and
  Socher]{Merity-2018-Regularizing}
S.~Merity, N.~S. Keskar, and R.~Socher.
\newblock Regularizing and optimizing {LSTM} language models.
\newblock In \emph{ICLR}, 2018.
\newblock URL \url{https://openreview.net/forum?id=SyyGPP0TZ}.

\bibitem[Nesterov and Spokoiny(2017)]{Nesterov-2017-Random}
Y.~Nesterov and V.~Spokoiny.
\newblock Random gradient-free minimization of convex functions.
\newblock \emph{Foundations of Computational Mathematics}, 17\penalty0
  (2):\penalty0 527--566, 2017.

\bibitem[Ouyang et~al.(2022)Ouyang, Wu, Jiang, Almeida, Wainwright, Mishkin,
  Zhang, Agarwal, Slama, Ray, et~al.]{Ouyang-2022-Training}
L.~Ouyang, J.~Wu, X.~Jiang, D.~Almeida, C.~L. Wainwright, P.~Mishkin, C.~Zhang,
  S.~Agarwal, K.~Slama, A.~Ray, et~al.
\newblock Training language models to follow instructions with human feedback.
\newblock In \emph{NeurIPS}, pages 27730--27744, 2022.

\bibitem[Pal et~al.(2024)Pal, Karkhanis, Dooley, Roberts, Naidu, and
  White]{Pal-2024-Smaug}
A.~Pal, D.~Karkhanis, S.~Dooley, M.~Roberts, S.~Naidu, and C.~White.
\newblock Smaug: Fixing failure modes of preference optimisation with
  {DPO}-positive.
\newblock \emph{ArXiv Preprint: 2402.13228}, 2024.

\bibitem[Pang et~al.(2024)Pang, Yuan, He, Cho, Sukhbaatar, and
  Weston]{Pang-2024-Iterative}
R.~Y. Pang, W.~Yuan, H.~He, K.~Cho, S.~Sukhbaatar, and J.~Weston.
\newblock Iterative reasoning preference optimization.
\newblock In \emph{NeurIPS}, pages 116617--116637, 2024.

\bibitem[Park et~al.(2024)Park, Rafailov, Ermon, and
  Finn]{Park-2024-Disentangling}
R.~Park, R.~Rafailov, S.~Ermon, and C.~Finn.
\newblock Disentangling length from quality in direct preference optimization.
\newblock In \emph{ACL}, pages 4998--5017, 2024.

\bibitem[Pascanu et~al.(2013)Pascanu, Mikolov, and
  Bengio]{Pascanu-2013-Difficulty}
R.~Pascanu, T.~Mikolov, and Y.~Bengio.
\newblock On the difficulty of training recurrent neural networks.
\newblock In \emph{ICML}, pages 1310--1318, 2013.

\bibitem[Peters et~al.(2018)Peters, Neumann, Zettlemoyer, and
  Yih]{Peters-2018-Dissecting}
M.~E. Peters, M.~Neumann, L.~Zettlemoyer, and W-T. Yih.
\newblock Dissecting contextual word embeddings: Architecture and
  representation.
\newblock In \emph{EMNLP}, pages 1499--1509, 2018.

\bibitem[Plan and Vershynin(2012)]{Plan-2012-Robust}
Y.~Plan and R.~Vershynin.
\newblock Robust 1-bit compressed sensing and sparse logistic regression: A
  convex programming approach.
\newblock \emph{IEEE Transactions on Information Theory}, 59\penalty0
  (1):\penalty0 482--494, 2012.

\bibitem[Rafailov et~al.(2023)Rafailov, Sharma, Mitchell, Ermon, Manning, and
  Finn]{Rafailov-2023-Direct}
R.~Rafailov, A.~Sharma, E.~Mitchell, S.~Ermon, C.~D. Manning, and C.~Finn.
\newblock Direct preference optimization: Your language model is secretly a
  reward model.
\newblock In \emph{NeurIPS}, pages 53728--53741, 2023.

\bibitem[Rafailov et~al.(2024{\natexlab{a}})Rafailov, Chittepu, Park, Sikchi,
  Hejna, Knox, Finn, and Niekum]{Rafailov-2024-Scaling}
R.~Rafailov, Y.~Chittepu, R.~Park, H.~Sikchi, J.~Hejna, W.~B. Knox, C.~Finn,
  and S.~Niekum.
\newblock Scaling laws for reward model overoptimization in direct alignment
  algorithms.
\newblock In \emph{NeurIPS}, pages 126207--126242, 2024{\natexlab{a}}.

\bibitem[Rafailov et~al.(2024{\natexlab{b}})Rafailov, Hejna, Park, and
  Finn]{Rafailov-2024-From}
R.~Rafailov, J.~Hejna, R.~Park, and C.~Finn.
\newblock From \$r\$ to \$q{\textasciicircum}*\$: Your language model is
  secretly a {Q}-function.
\newblock In \emph{COLM}, 2024{\natexlab{b}}.
\newblock URL \url{https://openreview.net/forum?id=kEVcNxtqXk}.

\bibitem[Razin et~al.(2025)Razin, Malladi, Bhaskar, Chen, Arora, and
  Hanin]{Razin-2025-Unintentional}
N.~Razin, S.~Malladi, A.~Bhaskar, D.~Chen, S.~Arora, and B.~Hanin.
\newblock Unintentional unalignment: Likelihood displacement in direct
  preference optimization.
\newblock In \emph{ICLR}, 2025.
\newblock URL \url{https://openreview.net/forum?id=uaMSBJDnRv}.

\bibitem[Ren and Sutherland(2025)]{Ren-2025-Learning}
Y.~Ren and D.~J. Sutherland.
\newblock Learning dynamics of {LLM} finetuning.
\newblock In \emph{ICLR}, 2025.
\newblock URL \url{https://openreview.net/forum?id=tPNHOoZFl9}.

\bibitem[Salimans et~al.(2017)Salimans, Ho, Chen, Sidor, and
  Sutskever]{Salimans-2017-Evolution}
T.~Salimans, J.~Ho, X.~Chen, S.~Sidor, and I.~Sutskever.
\newblock Evolution strategies as a scalable alternative to reinforcement
  learning.
\newblock \emph{ArXiv Preprint: 1703.03864}, 2017.

\bibitem[Shamir(2017)]{Shamir-2017-Optimal}
O.~Shamir.
\newblock An optimal algorithm for bandit and zero-order convex optimization
  with two-point feedback.
\newblock \emph{Journal of Machine Learning Research}, 18\penalty0
  (1):\penalty0 1703--1713, 2017.

\bibitem[Shi et~al.(2025)Shi, Zhou, and Du]{Shi-2025-Crucial}
R.~Shi, R.~Zhou, and S.~S. Du.
\newblock The crucial role of samplers in online direct preference
  optimization.
\newblock In \emph{ICLR}, 2025.
\newblock URL \url{https://openreview.net/forum?id=F6z3utfcYw}.

\bibitem[Singhal et~al.(2024)Singhal, Goyal, Xu, and
  Durrett]{Singhal-2024-Long}
P.~Singhal, T.~Goyal, J.~Xu, and G.~Durrett.
\newblock A long way to go: Investigating length correlations in {RLHF}.
\newblock In \emph{COLM}, 2024.
\newblock URL \url{https://openreview.net/forum?id=G8LaO1P0xv}.

\bibitem[Song et~al.(2024{\natexlab{a}})Song, Yu, Li, Yu, Huang, Li, and
  Wang]{Song-2024-Preference}
F.~Song, B.~Yu, M.~Li, H.~Yu, F.~Huang, Y.~Li, and H.~Wang.
\newblock Preference ranking optimization for human alignment.
\newblock In \emph{AAAI}, pages 18990--18998, 2024{\natexlab{a}}.

\bibitem[Song et~al.(2024{\natexlab{b}})Song, Swamy, Singh, Bagnell, and
  Sun]{Song-2024-Importance}
Y.~Song, G.~Swamy, A.~Singh, J.~Bagnell, and W.~Sun.
\newblock The importance of online data: Understanding preference fine-tuning
  via coverage.
\newblock In \emph{NeurIPS}, pages 12243--12270, 2024{\natexlab{b}}.

\bibitem[Stiennon et~al.(2020)Stiennon, Ouyang, Wu, Ziegler, Lowe, Voss,
  Radford, Amodei, and Christiano]{Stiennon-2020-Learning}
N.~Stiennon, L.~Ouyang, J.~Wu, D.~Ziegler, R.~Lowe, C.~Voss, A.~Radford,
  D.~Amodei, and P.~F. Christiano.
\newblock Learning to summarize with human feedback.
\newblock In \emph{NeurIPS}, pages 3008--3021, 2020.

\bibitem[Tajwar et~al.(2024)Tajwar, Singh, Sharma, Rafailov, Schneider, Xie,
  Ermon, Finn, and Kumar]{Tajwar-2024-Preference}
F.~Tajwar, A.~Singh, A.~Sharma, R.~Rafailov, J.~Schneider, T.~Xie, S.~Ermon,
  C.~Finn, and A.~Kumar.
\newblock Preference fine-tuning of {LLM}s should leverage suboptimal,
  on-policy data.
\newblock In \emph{ICML}, pages 47441--47474, 2024.

\bibitem[Tang et~al.(2024{\natexlab{a}})Tang, Guo, Zheng, Calandriello, Munos,
  Rowland, Richemond, Valko, Pires, and Piot]{Tang-2024-Generalized}
Y.~Tang, Z.~Guo, Z.~Zheng, D.~Calandriello, R.~Munos, M.~Rowland, P.~H.
  Richemond, M.~Valko, B.~Pires, and B.~Piot.
\newblock Generalized preference optimization: A unified approach to offline
  alignment.
\newblock In \emph{ICML}, pages 47725--47742, 2024{\natexlab{a}}.

\bibitem[Tang et~al.(2024{\natexlab{b}})Tang, Rybin, and
  Chang]{Tang-2024-Zeroth}
Z.~Tang, D.~Rybin, and T-H. Chang.
\newblock Zeroth-order optimization meets human feedback: Provable learning via
  ranking oracles.
\newblock In \emph{ICLR}, 2024{\natexlab{b}}.
\newblock URL \url{https://openreview.net/forum?id=TVDUVpgu9s}.

\bibitem[Touvron et~al.(2023)Touvron, Lavril, Izacard, Martinet, Lachaux,
  Lacroix, Rozi{\`e}re, Goyal, Hambro, Azhar, et~al.]{Touvron-2023-Llama}
H.~Touvron, T.~Lavril, G.~Izacard, X.~Martinet, M-A. Lachaux, T.~Lacroix,
  B.~Rozi{\`e}re, N.~Goyal, E.~Hambro, F.~Azhar, et~al.
\newblock Llama: Open and efficient foundation language models.
\newblock \emph{ArXiv Preprint: 2302.13971}, 2023.

\bibitem[Vaswani et~al.(2017)Vaswani, Shazeer, Parmar, Uszkoreit, Jones, Gomez,
  Kaiser, and Polosukhin]{Vaswani-2017-Attention}
A.~Vaswani, N.~Shazeer, N.~Parmar, J.~Uszkoreit, L.~Jones, A.~N. Gomez,
  {\L}.~Kaiser, and I.~Polosukhin.
\newblock Attention is all you need.
\newblock In \emph{NeurIPS}, pages 6000--6010, 2017.

\bibitem[Wallace et~al.(2024)Wallace, Dang, Rafailov, Zhou, Lou, Purushwalkam,
  Ermon, Xiong, Joty, and Naik]{Wallace-2024-Diffusion}
B.~Wallace, M.~Dang, R.~Rafailov, L.~Zhou, A.~Lou, S.~Purushwalkam, S.~Ermon,
  C.~Xiong, S.~Joty, and N.~Naik.
\newblock Diffusion model alignment using direct preference optimization.
\newblock In \emph{CVPR}, pages 8228--8238, 2024.

\bibitem[Wang et~al.(2023)Wang, Chen, Pei, Xie, Kang, Zhang, Xu, Xiong, Dutta,
  Schaeffer, Truong, Arora, Mazeika, Hendrycks, Lin, Cheng, Koyejo, Song, and
  Li]{Wang-2023-DecodingTrust}
B.~Wang, W.~Chen, H.~Pei, C.~Xie, M.~Kang, C.~Zhang, C.~Xu, Z.~Xiong, R.~Dutta,
  R.~Schaeffer, S.~T. Truong, S.~Arora, M.~Mazeika, D.~Hendrycks, Z.~Lin,
  Y.~Cheng, S.~Koyejo, D.~Song, and B.~Li.
\newblock Decodingtrust: A comprehensive assessment of trustworthiness in {GPT}
  models.
\newblock In \emph{NeurIPS Dataset and Benchmark Track}, 2023.
\newblock URL \url{https://openreview.net/forum?id=kaHpo8OZw2}.

\bibitem[Wang et~al.(2018)Wang, Du, Balakrishnan, and
  Singh]{Wang-2018-Stochastic}
Y.~Wang, S.~Du, S.~Balakrishnan, and A.~Singh.
\newblock Stochastic zeroth-order optimization in high dimensions.
\newblock In \emph{AISTATS}, pages 1356--1365, 2018.

\bibitem[Xiao et~al.(2024)Xiao, Yuan, Zhu, Li, and Honavar]{Xiao-2024-Cal}
T.~Xiao, Y.~Yuan, H.~Zhu, M.~Li, and V.~G. Honavar.
\newblock Cal-{DPO}: Calibrated direct preference optimization for language
  model alignment.
\newblock In \emph{NeurIPS}, pages 114289--114320, 2024.

\bibitem[Xie et~al.(2025)Xie, Foster, Krishnamurthy, Rosset, Awadallah, and
  Rakhlin]{Xie-2025-Exploratory}
T.~Xie, D.~J. Foster, A.~Krishnamurthy, C.~Rosset, A.~H. Awadallah, and
  A.~Rakhlin.
\newblock Exploratory preference optimization: Harnessing implicit
  $q^{*}$-approximation for sample-efficient {RLHF}.
\newblock In \emph{ICLR}, 2025.
\newblock URL \url{https://openreview.net/forum?id=QYigQ6gXNw}.

\bibitem[Xiong et~al.(2024)Xiong, Dong, Ye, Wang, Zhong, Ji, Jiang, and
  Zhang]{Xiong-2024-Iterative}
W.~Xiong, H.~Dong, C.~Ye, Z.~Wang, H.~Zhong, H.~Ji, N.~Jiang, and T.~Zhang.
\newblock Iterative preference learning from human feedback: Bridging theory
  and practice for {RLHF} under {KL}-constraint.
\newblock In \emph{ICML}, pages 54715--54754, 2024.

\bibitem[Xu et~al.(2024{\natexlab{a}})Xu, Sharaf, Chen, Tan, Shen, Van~Durme,
  Murray, and Kim]{Xu-2024-Contrastive}
H.~Xu, A.~Sharaf, Y.~Chen, W.~Tan, L.~Shen, B.~Van~Durme, K.~Murray, and Y.~J.
  Kim.
\newblock Contrastive preference optimization: Pushing the boundaries of {LLM}
  performance in machine translation.
\newblock In \emph{ICML}, pages 55204--55224, 2024{\natexlab{a}}.

\bibitem[Xu et~al.(2024{\natexlab{b}})Xu, Fu, Gao, Ye, Liu, Mei, Wang, Yu, and
  Wu]{Xu-2024-DPO}
S.~Xu, W.~Fu, J.~Gao, W.~Ye, W.~Liu, Z.~Mei, G.~Wang, C.~Yu, and Y.~Wu.
\newblock Is {DPO} superior to {PPO} for {LLM} alignment? a comprehensive
  study.
\newblock In \emph{ICML}, pages 54983--54998, 2024{\natexlab{b}}.

\bibitem[Yuan et~al.(2023)Yuan, Yuan, Tan, Wang, Huang, and
  Huang]{Yuan-2023-RRHF}
H.~Yuan, Z.~Yuan, C.~Tan, W.~Wang, S.~Huang, and F.~Huang.
\newblock {RRHF}: Rank responses to align language models with human feedback.
\newblock In \emph{NeurIPS}, pages 10935--10950, 2023.

\bibitem[Yuan et~al.(2025)Yuan, Cui, Wang, Ding, Wang, Shan, Liu, Deng, Chen,
  Xie, Lin, Liu, Zhou, Peng, Liu, and Sun]{Yuan-2025-Advancing}
L.~Yuan, G.~Cui, H.~Wang, N.~Ding, X.~Wang, B.~Shan, Z.~Liu, J.~Deng, H.~Chen,
  R.~Xie, Y.~Lin, Z.~Liu, B.~Zhou, H.~Peng, Z.~Liu, and M.~Sun.
\newblock Advancing {LLM} reasoning generalists with preference trees.
\newblock In \emph{ICLR}, 2025.
\newblock URL \url{https://openreview.net/forum?id=2ea5TNVR0c}.

\bibitem[Yuan et~al.(2024)Yuan, Pang, Cho, Li, Sukhbaatar, Xu, and
  Weston]{Yuan-2024-Self}
W.~Yuan, R.~Y. Pang, K.~Cho, X.~Li, S.~Sukhbaatar, J.~Xu, and J.~E. Weston.
\newblock Self-rewarding language models.
\newblock In \emph{ICML}, pages 57905--57923, 2024.

\bibitem[Yue and Joachims(2009)]{Yue-2009-Interactively}
Y.~Yue and T.~Joachims.
\newblock Interactively optimizing information retrieval systems as a dueling
  bandits problem.
\newblock In \emph{ICML}, pages 1201--1208, 2009.

\bibitem[Zhang et~al.(2020)Zhang, He, Sra, and Jadbabaie]{Zhang-2020-Why}
J.~Zhang, T.~He, S.~Sra, and A.~Jadbabaie.
\newblock Why gradient clipping accelerates training: A theoretical
  justification for adaptivity.
\newblock In \emph{ICLR}, 2020.
\newblock URL \url{https://openreview.net/forum?id=BJgnXpVYwS}.

\bibitem[Zhang and Ying(2025)]{Zhang-2025-Zeroth}
Q.~Zhang and L.~Ying.
\newblock Zeroth-order policy gradient for reinforcement learning from human
  feedback without reward inference.
\newblock In \emph{ICLR}, 2025.
\newblock URL \url{https://openreview.net/forum?id=cmYScmfu4Q}.

\bibitem[Zhang et~al.(2025)Zhang, Liu, Liu, Zhang, Yang, Liu, Chen, Sun, and
  Wang]{Zhang-2025-Reward}
S.~Zhang, Z.~Liu, B.~Liu, Y.~Zhang, Y.~Yang, Y.~Liu, L.~Chen, T.~Sun, and
  Z.~Wang.
\newblock Reward-augmented data enhances direct preference alignment of {LLM}s.
\newblock In \emph{ICLR Workshop on Navigating and Addressing Data Problems for
  Foundation Models}, 2025.
\newblock URL \url{https://openreview.net/forum?id=bpSD3IOgyS}.

\bibitem[Zhang et~al.(2024)Zhang, Li, Hong, Li, Zhang, Zheng, Chen, Lee, Yin,
  Hong, et~al.]{Zhang-2024-Revisiting}
Y.~Zhang, P.~Li, J.~Hong, J.~Li, Y.~Zhang, W.~Zheng, P-Y. Chen, J.~D. Lee,
  W.~Yin, M.~Hong, et~al.
\newblock Revisiting zeroth-order optimization for memory-efficient {LLM}
  fine-tuning: a benchmark.
\newblock In \emph{ICML}, pages 59173--59190, 2024.

\bibitem[Zhao et~al.(2025)Zhao, Winata, Das, Zhang, Yao, Tang, and
  Sahu]{Zhao-2025-RainbowPO}
H.~Zhao, G.~I. Winata, A.~Das, S-X. Zhang, D.~Yao, W.~Tang, and S.~Sahu.
\newblock Rainbow{PO}: A unified framework for combining improvements in
  preference optimization.
\newblock In \emph{ICLR}, 2025.
\newblock URL \url{https://openreview.net/forum?id=trKee5pIFv}.

\bibitem[Zhao et~al.(2024)Zhao, Ren, Hessel, Cardie, Choi, and
  Deng]{Zhao-2024-WildChat}
W.~Zhao, X.~Ren, J.~Hessel, C.~Cardie, Y.~Choi, and Y.~Deng.
\newblock Wildchat: 1m chat{GPT} interaction logs in the wild.
\newblock In \emph{ICLR}, 2024.
\newblock URL \url{https://openreview.net/forum?id=Bl8u7ZRlbM}.

\bibitem[Zhao et~al.(2023)Zhao, Joshi, Liu, Khalman, Saleh, and
  Liu]{Zhao-2023-Slic}
Y.~Zhao, R.~Joshi, T.~Liu, M.~Khalman, M.~Saleh, and P.~J. Liu.
\newblock {SLiC-HF}: Sequence likelihood calibration with human feedback.
\newblock \emph{ArXiv Preprint: 2305.10425}, 2023.

\bibitem[Zheng et~al.(2023)Zheng, Chiang, Sheng, Zhuang, Wu, Zhuang, Lin, Li,
  Li, and Xing]{Zheng-2023-Judging}
L.~Zheng, W-L. Chiang, Y.~Sheng, S.~Zhuang, Z.~Wu, Y.~Zhuang, Z.~Lin, Z.~Li,
  D.~Li, and E.~P. Xing.
\newblock Judging {LLM}-as-a-{J}udge with {MT}-bench and {C}hatbot {A}rena.
\newblock In \emph{NeurIPS}, pages 46595--46623, 2023.

\bibitem[Zhu et~al.(2023)Zhu, Jordan, and Jiao]{Zhu-2023-Principled}
B.~Zhu, M.~I. Jordan, and J.~Jiao.
\newblock Principled reinforcement learning with human feedback from pairwise
  or k-wise comparisons.
\newblock In \emph{ICML}, pages 43037--43067, 2023.

\bibitem[Ziegler et~al.(2019)Ziegler, Stiennon, Wu, Brown, Radford, Amodei,
  Christiano, and Irving]{Ziegler-2019-Fine}
D.~M. Ziegler, N.~Stiennon, J.~Wu, T.~B. Brown, A.~Radford, D.~Amodei,
  P.~Christiano, and G.~Irving.
\newblock Fine-tuning language models from human preferences.
\newblock \emph{ArXiv Preprint: 1909.08593}, 2019.

\end{thebibliography}

\newpage

\newpage
\appendix
\section{Limitations}
\textbf{Algorithm design.} First, although our method achieves computational and memory efficiency, scalability, and strong performance across multiple benchmarks and models by micro-finetuning only the output-layer parameters, we are currently unable to perform full-layer perturbation due to limited computational resources. We expect that perturbing all model parameters could further enhance preference alignment. Second, our method is a purely offline approach, which is similar to DPO and its variants, aligns the model strictly to the preference dataset, potentially limiting its ability to explore beyond observed data. Finally, the margin-threshold parameter distinguishing noisy from clean data plays a crucial role in our method's effectiveness. To control training costs, we focus primarily on how our method improves learning from noisy pairs. An important direction for future work is to investigate its performance on clean pairs and assess whether it could ultimately serve as a drop-in replacement for DPO when tuning the model on the full dataset.

\textbf{Alignment applications beyond helpfulness.} Our experiments primarily focus on the UltraFeedback dataset, which is mainly aimed at improving model helpfulness. Future work should explore our method's capability in aligning models along additional dimensions, such as safety and truthfulness, using relevant datasets and benchmarks \citep{Zhao-2024-WildChat,Ji-2023-Beavertails,Wang-2023-DecodingTrust,Lin-2022-TruthfulQA}.

\textbf{Performance drop on Arena-Hard.} As discussed in Section~\ref{sec:exp}, the poor performance on Arena-Hard may be due to the judge's bias toward longer responses, coupled with the shorter outputs generated by our method. This is exemplified by the observed decrease in our method's average output length in cases where Arena-Hard performance declines. It is also important to note that the noisy pairs vary across models, meaning that our method may be updated using different subsets of data and thus learn different aspects of helpfulness from the preference dataset.

\section{Further Related Works}\label{app:additional}
We make the comments on other topics, including more discussions on preference learning methods, the analysis of preference learning methods, zeroth-order optimization methods, and the likelihood displacement. For an overview of preference learning methods, we refer to the recent survey~\citep{Casper-2023-Open}. 

\textbf{More discussions on preference learning methods.} The lack of explicit reward models in DPO~\citep{Rafailov-2023-Direct} is known to constrain its ability to the size and quality of offline preference pairs. To address the limitation, subsequent works proposed to augment preference data using a trained SFT policy~\citep{Zhao-2023-Slic} or a refined SFT policy with rejection sampling~\citep{Liu-2024-Statistical}. The DPO loss was also extended to token-level MDP~\citep{Rafailov-2024-From} given that the transition is deterministic, i.e., the next state is determined if the current state and action are chosen, which covered the fine-tuning of LLMs. Then,~\cite{Azar-2024-General} generalized DPO to a wider class of RL problems without the notion of a reward function. Instead of maximizing the reward in a KL-constrained problem, they proposed to optimize a general non-decreasing function of the ground-truth population-level preference probability. There are several other DPO variants~\citep{Ethayarajh-2024-Model, Park-2024-Disentangling, Xu-2024-Contrastive, Meng-2024-SimPO, Chen-2025-MallowsPO, Zhao-2025-RainbowPO, Guo-2025-Role}. For example,~\citep{Ethayarajh-2024-Model} aligned policy with preferences and designed the loss using a prospect theory,~\citep{Tang-2024-Generalized} optimized a general preference loss instead of the log-likelihood loss, and~\citep{Meng-2024-SimPO} aligned the reward function in the preference optimization objective with the generation metric.~\cite{Dong-2024-RLHF} and~\citep{Xiong-2024-Iterative} proposed to generate human feedback in an online fashion to mitigate the distribution-shift and over-parameterization phenomenon. There is the attempt to understand the theoretical performance of DPO~\citep{Azar-2024-General}, but the authors only showed the existence of optima of the loss function, without any policy optimality and sample complexity guarantees.

\textbf{Analysis of preference learning methods.} In this context,~\citep{Zhu-2023-Principled} formulated RLHF as the contextual bandit problem, and proved the convergence of the maximum likelihood estimator.~\citep{Xiong-2024-Iterative} showed the benefits of KL-regularization in sample complexity of online exploration in DPO.~\citep{Xie-2025-Exploratory} studied the problem of online exploration using KL-regularized Markov decision processes, and proved the sample complexity guarantee of a exploration bonus.~\citep{Liu-2024-Provably} investigated the issue of over-optimization and proved the finite-sample guarantees.~\citep{Song-2024-Importance} conducted a rigorous analysis through the lens of dataset coverage to differentiate offline DPO and online RLHF. Recently, several works have also reported faster convergence rate than the information-theoretic lower bounds for online reward maximization in RL by exploiting the structure induced by KL regularization. For example,~\citep{Shi-2025-Crucial} studied the tabular softmax parametrization setting and established quadratic convergence results.

\textbf{Zeroth-order optimization methods.} The idea for zeroth-order optimization is to approximate a gradient using either a one-point estimator~\citep{Flaxman-2005-Online} or a two-point estimator~\citep{Agarwal-2010-Optimal,Ghadimi-2013-Stochastic, Duchi-2015-Optimal, Shamir-2017-Optimal, Nesterov-2017-Random}, where the latter approach achieves a better finite-time convergence guarantee. Despite the meteoric rise of two-point-based gradient-free methods, most of the work is restricted to convex optimization~\citep{Duchi-2015-Optimal, Shamir-2017-Optimal, Wang-2018-Stochastic} and smooth and nonconvex optimization~\citep{Nesterov-2017-Random, Ghadimi-2013-Stochastic, Lian-2016-Comprehensive, Liu-2018-Zeroth, Chen-2019-Zo, Ji-2019-Improved, Huang-2022-Accelerated}. The convergence guarantees are obtained for both nonsmooth and convex setting~\citep{Duchi-2015-Optimal, Shamir-2017-Optimal} and smooth and nonconvex setting~\citep{Ghadimi-2013-Stochastic, Nesterov-2017-Random}. Additional regularity conditions, e.g., a finite-sum structure, allow us to leverage variance-reduction techniques~\citep{Liu-2018-Zeroth, Chen-2019-Zo, Ji-2019-Improved} and the best known convergence guarantee is obtained in~\citep{Huang-2022-Accelerated}. Very recently, the zeroth-order optimization methods have been developed for nonsmooth nonconvex optimization with solid theoretical guarantee~\citep{Lin-2022-Gradient, Kornowski-2024-Algorithm}. In another direction, the zeroth-order optimization methods were extended to the RL setting and have achieved an empirical success as a scalable alternative to classic methods such as Q-learning or policy gradient methods~\citep{Salimans-2017-Evolution, Conti-2018-Improving}. This strategy has also been applied in preference-based RL~\citep{Akrour-2011-Preference, Busa-2014-Preference} and adopted to the LLM fine-tuning~\citep{Malladi-2023-Fine, Zhang-2024-Revisiting}. In these settings, the loss function can be explicitly estimated or calculated, and thus can be queried to construct the gradient estimator. By contrast, our method and the methods in~\citep{Tang-2024-Zeroth, Zhang-2025-Zeroth} were developed based on comparison oracles and/or ranking oracles, where the noisy loss function values are not accessible. 

\textbf{Likelihood displacement.} We provide a brief overview of claims regarding likelihood displacement. Indeed, several works claimed that samples with similar preferences are responsible for likelihood displacement~\citep{Pal-2024-Smaug, Tajwar-2024-Preference, Razin-2025-Unintentional} but the similarities were measured using different metrics. Other reasons include the initial SFT~\citep{Rafailov-2024-From}, the presence of multiple training samples and limited model capacity~\citep{Tajwar-2024-Preference}, and the squeezing effect~\citep{Ren-2025-Learning}. Recently,~\citep{Razin-2025-Unintentional} have conducted a thorough investigation to understand the causes of likelihood displacement and their results suggested that samples with similar preferences might contribute more than others. Regarding the implications of likelihood displacement, previous works found that DPO tends to degrade the performance on math and reasoning tasks~\citep{Pal-2024-Smaug, Pang-2024-Iterative, Meng-2024-SimPO, Yuan-2025-Advancing}; indeed, only a few responses are correct and thus any likelihood displacement reveals the adverse effects for correct alignment.

\textbf{Learning from noisy preference data.} ComPO is designed to address the challenge of noisy preference labels -- a common issue that can significantly degrade the performance of other preference learning methods~\citep{Amini-2024-Direct, Xiao-2024-Cal}. From this perspective, ComPO is not intended as a direct replacement for existing methods, but rather as a complementary and modular component that enhances their robustness. Moreover, learning from noisy preference data has been studied in prior works, including those leveraging reward scores via conditional DPO~\citep{Kim-2024-Margin, Zhang-2025-Reward}. Conditional DPO modifies the DPO objective by conditioning on reward scores and solving it via gradient-based methods and can be combined with ComPO in a similar way as we did with SimPO+ComPO. 

\section{Missing Proofs}
We present some propositions and lemmas for analyzing the convergence property of Algorithm~\ref{alg:basic}. Based on these results, we give a detailed proof of Theorem~\ref{thm:main}. 

\subsection{Technical lemmas}
Throughout this subsection, we assume that there exists a $\ell$-smooth function $f$ satisfying $f(\theta') < f(\theta)$ if and only if $\pi_{\theta'}(\y^+ | \x) > \pi_\theta(\y^+ | \x)$ and $\pi_{\theta'}(\y^- | \x) < \pi_\theta(\y^- | \x)$ for $\forall (\x, \y^+, \y^-) \in \DCal$. Then, the construction of the gradient estimator is inspired by the observation as follows, 
\begin{equation*}
\underbrace{\CCal_\pi(\theta, \theta+r\z_i)}_{=:y_i} = \sign(f(\theta+r\z_i)-f(\theta)) \approx \sign(\z_i^\top \nabla f(\theta)) = \underbrace{\sign\left(\z_i^\top \tfrac{\nabla f(\theta)}{\|\nabla f(\theta)\|}\right)}_{=:\bar{y}_i}
\end{equation*}
where $r>0$ and $\z_i$ is a i.i.d. sample which is drawn uniformly from a unit sphere in $\br^d$. As such, $y_i = \CCal_\pi(\theta, \theta+r\z_i)$ can be interpreted as one approximate 1-bit measurements of $\nabla f(\theta)$. We present a proposition which is a general result concerning about 1-bit compressed sensing~\citep{Plan-2012-Robust, Cai-2022-One} and is also crucial to the subsequent analysis. 
\begin{proposition}\label{prop:grad-est}
Suppose that $\{\z_i\}_{1 \leq i \leq m}$ are $m$ i.i.d. samples uniformly drawn from a unit sphere in $\br^d$ and let $\|\bar{\g}\|_1 \leq \sqrt{s}$, $\|\bar{\g}\| = 1$. We also define $\bar{y}_i = \sign(\z_i^\top \bar{\g})$ for $1 \leq i \leq m$ and let $y_i = \xi_i \bar{y}_i$ where $\xi \in \{-1, 1\}$ is an i.i.d. sample with $\PP(\xi_i = 1) = p > \frac{1}{2}$. Then, we have
\begin{equation*}
\hat{\g} = \argmax_{\|\g\|_1 \leq \sqrt{s}, \|\g\| \leq 1} \sum_{i=1}^m y_i \z_i^\top\g,
\end{equation*}
satisfies $\PP(\|\hat{\g} - \bar{\g}\| \leq \tau) \geq 1-8\exp(-c_1\tau^4 m)$ as long as $m \geq c_2\tau^{-4}(p-\frac{1}{2})^{-2}s\log(2d/s)$. 
\end{proposition}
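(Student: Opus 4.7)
The plan is to follow the Plan--Vershynin framework for 1-bit compressed sensing, as extended to the comparison-oracle/random-sign-flip setting in \citet{Cai-2022-One}. The key is to view $\hat{\g}$ as the maximizer over $K := \{\g : \|\g\|_1 \leq \sqrt{s},\, \|\g\| \leq 1\}$ of the linear empirical functional $L_m(\g) := \tfrac{1}{m}\sum_i y_i \z_i^\top \g$, which in expectation points along the direction of $\bar{\g}$ with a strictly positive bias proportional to $(2p-1)$. The proof decomposes into three natural stages: (i) compute the population objective and show $\bar{\g}$ is its maximizer on $K$; (ii) establish uniform concentration of $L_m$ around its expectation on $K$; (iii) convert optimality of $\hat{\g}$ into a bound on $\|\hat{\g}-\bar{\g}\|$.

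First I would compute $\EE[y_i \z_i]$. Because $\xi_i \perp \z_i$ and $\EE[\xi_i] = 2p-1$, rotational invariance of the uniform sphere distribution gives $\EE[\sign(\z_i^\top \bar{\g})\,\z_i] = \mu_d \bar{\g}$ for some explicit $\mu_d > 0$ of order $d^{-1/2}$. Thus the population objective $F(\g) := \EE[L_m(\g)] = (2p-1)\mu_d\,\bar{\g}^\top\g$ is linear in $\g$ and, by Cauchy--Schwarz together with $\|\g\|\leq 1$ and $\bar{\g}\in K$, is uniquely maximized on $K$ at $\g = \bar{\g}$ with value $(2p-1)\mu_d$.

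Next I would establish the uniform deviation bound. Writing $L_m(\g) - F(\g) = \tfrac{1}{m}\sum_i\bigl(y_i\z_i - \EE[y_i\z_i]\bigr)^\top\g$, the supremum over $\g \in K$ is controlled by the Gaussian/spherical mean width $w(K)$. The standard calculation for the intersection of an $\ell_1$ and $\ell_2$ ball yields $w(K) \lesssim \sqrt{s\log(2d/s)}$. Applying the bounded-process concentration inequality from Theorem 1.1 of \citet{Plan-2012-Robust} (as adapted in Section 3 of \citet{Cai-2022-One} to handle the sign-flip noise $\xi_i$), one obtains
\begin{equation*}
\PP\Bigl(\sup_{\g \in K}|L_m(\g) - F(\g)| \leq \tfrac{1}{2}(2p-1)\mu_d \tau^2\Bigr) \geq 1 - 8\exp(-c_1\tau^4 m),
\end{equation*}
provided $m \geq c_2\tau^{-4}(p-\tfrac{1}{2})^{-2}s\log(2d/s)$; the $(p-\tfrac{1}{2})^{-2}$ factor appears because the target accuracy is measured relative to the signal size $(2p-1)\mu_d$.

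Finally, on this high-probability event, optimality of $\hat{\g}$ and feasibility of $\bar{\g}$ give $L_m(\hat{\g})\geq L_m(\bar{\g})$, so $F(\hat{\g})\geq F(\bar{\g}) - (2p-1)\mu_d\tau^2$, i.e.\ $\bar{\g}^\top\hat{\g}\geq 1 - \tau^2$. Combining with $\|\hat{\g}\|\leq 1 = \|\bar{\g}\|$ yields $\|\hat{\g}-\bar{\g}\|^2 = \|\hat{\g}\|^2 + 1 - 2\bar{\g}^\top\hat{\g} \leq 2 - 2(1-\tau^2) \leq 2\tau^2$, which (after absorbing constants into $c_1,c_2$) delivers the stated bound. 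The principal obstacle is step (ii): producing a uniform concentration bound with the correct $\tau^4 m$ tail exponent and the $(p-\tfrac{1}{2})^{-2}s\log(2d/s)$ sample complexity simultaneously. This requires a careful symmetrization/Dudley-style estimate on the empirical process indexed by $K$, together with a noise-robust version of the hyperplane-tessellation argument that handles the random sign flips $\xi_i$ without inflating the effective noise level beyond $(2p-1)^{-1}$.
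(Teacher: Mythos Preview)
The paper does not prove this proposition at all; it is stated as a known result from 1-bit compressed sensing and attributed to \citet{Plan-2012-Robust} and \citet{Cai-2022-One}. Your sketch correctly reconstructs the Plan--Vershynin argument underlying those references (population maximizer, uniform concentration over $K$ via mean-width control, then the inner-product-to-distance conversion), so you are aligned with precisely the approach the paper defers to.
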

In order to apply Proposition~\ref{prop:grad-est}, we first show that $\PP(y_i = \bar{y}_i) = p > \frac{1}{2}$ for all $i=1, 2, \ldots, m$. Intuitively, this holds if $\|\nabla f(\theta)\|$ is sufficiently large and $r$ is sufficiently small. 
\begin{lemma}\label{lemma:grad-est}
Suppose that $\|\nabla f(\theta)\| > \frac{\epsilon}{2}$ and $r = \frac{\epsilon}{40\ell\sqrt{d}}$. Then, we have that $\PP(y_i = \bar{y}_i) > 0.7$. 
\end{lemma}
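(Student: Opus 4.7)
The plan is to reduce the event $\{y_i = \bar{y}_i\}$ to a slab-avoidance event for the random projection $\z_i^\top \g$, where $\g := \nabla f(\theta)/\|\nabla f(\theta)\|$, and then invoke a standard anti-concentration estimate for the uniform distribution on the unit sphere $S^{d-1}$.

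First I would apply the $\ell$-smoothness of $f$ to obtain the quadratic Taylor bound $|f(\theta+r\z_i) - f(\theta) - r\z_i^\top \nabla f(\theta)| \leq \ell r^2/2$ (using $\|\z_i\|=1$). This yields the deterministic implication: whenever $|\z_i^\top \nabla f(\theta)| > \ell r/2$, the sign of $f(\theta + r\z_i) - f(\theta)$ agrees with the sign of $\z_i^\top \nabla f(\theta)$, i.e.\ $y_i = \bar{y}_i$. Contrapositively, $\{y_i \neq \bar{y}_i\} \subseteq \{|\z_i^\top \g| \leq \ell r/(2\|\nabla f(\theta)\|)\}$.

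Second, I would substitute the hypotheses $\|\nabla f(\theta)\| > \epsilon/2$ and $r = \epsilon/(40\ell\sqrt{d})$ to bound the slab half-width strictly below $1/(40\sqrt{d})$. The remaining task is anti-concentration for $\z_i^\top \g$ under $\z_i \sim \text{Unif}(S^{d-1})$: by rotational invariance this projection has density $c_d(1-t^2)^{(d-3)/2}$ on $[-1,1]$ with peak $c_d = \Gamma(d/2)/(\sqrt{\pi}\,\Gamma((d-1)/2)) \leq \sqrt{d/(2\pi)}$. Bounding the density by its peak gives
\begin{equation*}
\PP\!\left(|\z_i^\top \g| \leq \tfrac{1}{40\sqrt{d}}\right) \leq 2 c_d \cdot \tfrac{1}{40\sqrt{d}} \leq \tfrac{1}{20\sqrt{2\pi}} < 0.03,
\end{equation*}
so $\PP(y_i = \bar{y}_i) \geq 0.97 > 0.7$, which is the claim.

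The main obstacle is pinning down the anti-concentration constant with an explicit numerical value rather than only an asymptotic $\sqrt{d}$ rate, since the $0.7$ threshold is quantitative. I would handle this by verifying $c_d \leq \sqrt{d/(2\pi)}$ uniformly in $d \geq 2$ via Gautschi's inequality (or a direct Beta-function calculation); the $d=1$ case is trivial since $S^0 = \{\pm 1\}$ and the slab is empty. As a robustness check, the cruder fallback $(1-t^2)^{(d-3)/2} \leq 1$ with a single integration still yields $c_d \leq \sqrt{d}$ and a slab probability of at most $0.05$, comfortably below $0.3$, so there is substantial slack in the constant $40$ appearing in the choice of $r$.
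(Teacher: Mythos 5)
Your proposal is correct, and its first half coincides with the paper's argument: both use smoothness to show that the comparison bit agrees with $\sign(\z_i^\top\nabla f(\theta))$ outside a slab of half-width $O(\epsilon/(\ell\sqrt{d}))$ around the hyperplane orthogonal to the gradient (you use the first-order smoothness inequality, while the paper writes a second-order Taylor expansion with an explicit Hessian term, which implicitly assumes twice-differentiability; your version is slightly cleaner on that point). Where you genuinely diverge is the anti-concentration step. The paper represents the relevant coordinate of $\z_i$ as $v_1/\|v\|$ for a Gaussian vector $v$ with $\EE[\|v\|^2]=1$, bounds $\PP(|v_1|\le \tfrac{1}{5\sqrt{d}})<0.16$ via the normal CDF and $\PP(\|v\|\ge 4)\le \tfrac{1}{16}$ via Markov, and lands just above $0.7$. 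You instead integrate the explicit marginal density $c_d(1-t^2)^{(d-3)/2}$ of a spherical coordinate and bound $c_d\le\sqrt{d/(2\pi)}$ by a Gamma-ratio (Gautschi/Wendel) inequality, obtaining a failure probability below $0.03$. Your route yields a sharper constant, makes the $1/\sqrt{d}$ slab-width scaling transparent, and quantifies the large slack in the constant $40$; the paper's route avoids any Beta/Gamma computation and needs only one normal-table value plus Markov's inequality. One small caveat in your write-up: for $d=2$ the density $(1-t^2)^{-1/2}$ is not maximized at $t=0$, so ``bounding the density by its peak'' is not literally valid there; on the slab $|t|\le \tfrac{1}{40\sqrt{2}}$ the density is still at most $c_2(1-\tfrac{1}{3200})^{-1/2}$, so your bound survives up to a factor $1.0002$, and you already dispatch $d=1$ separately. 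This is cosmetic given the slack, but worth stating explicitly if you write the argument out.
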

\begin{proof}
We first show that $y_i = \bar{y}_i$ if $|\z_i^\top \nabla f(\theta)| \geq \frac{\epsilon}{40\sqrt{d}}$. Indeed, by the Taylor's theorem, we have
\begin{equation}\label{inequality:grad-est-1}
y_i = \sign(f(\theta+r\z_i)-f(\theta)) = \sign\left(r\z_i^\top \nabla f(\theta) + \tfrac{1}{2}r^2\z_i^\top \nabla^2 f(\theta + \gamma \z_i)\z_i\right). 
\end{equation}
Since $f$ is $\ell$-smooth, $|\z_i^\top \nabla f(\theta)| \geq \frac{\epsilon}{40\sqrt{d}}$ and $r = \frac{\epsilon}{40\ell\sqrt{d}}$, we have
\begin{equation*}
|r\z_i^\top \nabla f(\theta)| - \left|\tfrac{1}{2}r^2\z_i^\top \nabla^2 f(\theta + \gamma \z_i)\z_i\right| \geq r\left(\tfrac{\epsilon}{40\sqrt{d}} - \tfrac{r\ell}{2}\right) > 0, 
\end{equation*}
which implies 
\begin{equation*}
r\z_i^\top \nabla f(\theta)-|r\z_i^\top \nabla f(\theta)| < r\z_i^\top \nabla f(\theta) + \tfrac{1}{2}r^2\z_i^\top \nabla^2 f(\theta + \gamma \z_i)\z_i < r\z_i^\top \nabla f(\theta)+|r\z_i^\top \nabla f(\theta)|, 
\end{equation*}
Equivalently, we have
\begin{equation}\label{inequality:grad-est-2}
r\z_i^\top \nabla f(\theta) + \tfrac{1}{2}r^2\z_i^\top \nabla^2 f(\theta + \gamma \z_i)\z_i \left\{
\begin{array}{cl}
> 0, & \textnormal{if } \z_i^\top \nabla f(\theta) > 0, \\
< 0, & \textnormal{if } \z_i^\top \nabla f(\theta) < 0. 
\end{array}\right. 
\end{equation}
Plugging Eq.~\eqref{inequality:grad-est-2} into Eq.~\eqref{inequality:grad-est-1} yields $y_i = \sign(\z_i^\top \nabla f(\theta)) = \bar{y}_i$. 

Then, it suffices to show that $\PP(|\z_i^\top \nabla f(\theta)| \geq \frac{\epsilon}{40\sqrt{d}}) > 0.7$. Since $\|\nabla f(\theta)\| > \frac{\epsilon}{2}$, we have
\begin{equation}\label{inequality:grad-est-3}
\PP\left(|\z_i^\top \nabla f(\theta)| \geq \tfrac{\epsilon}{40\sqrt{d}}\right) \geq \PP\left(|\z_i^\top \nabla f(\theta)| \geq \tfrac{\|\nabla f(\theta)\|}{20\sqrt{d}}\right) = \PP\left(\left|\z_i^\top \tfrac{\nabla f(\theta)}{\|\nabla f(\theta)\|}\right| \geq \tfrac{1}{20\sqrt{d}}\right). 
\end{equation}
Since $\z_i$ is a i.i.d. sample which is drawn uniformly from a unit sphere in $\br^d$, the rotation invariance implies
\begin{equation*}
\PP\left(\left|\z_i^\top \tfrac{\nabla f(\theta)}{\|\nabla f(\theta)\|}\right| \geq \tfrac{1}{20\sqrt{d}}\right) = \PP\left(|z_{i, 1}| \geq \tfrac{1}{20\sqrt{d}}\right). 
\end{equation*}
Here, $z_{i, 1} \in \br$ is the first coordinate of $\z_i$ and is distributed as $\frac{v_1}{\|v\|}$ where $v$ is a Gaussian random variable with mean $0$ and variance $I_d$. Then, we have
\begin{equation*}
\PP\left(|z_{i, 1}| \geq \tfrac{1}{20\sqrt{d}}\right) = \PP\left(\tfrac{|v_1|}{\|v\|} \geq \tfrac{1}{20\sqrt{d}}\right) = 1 - \PP\left(|v_1| \leq \tfrac{1}{5\sqrt{d}}\right) - \PP(\|v\| \geq 4). 
\end{equation*}
Since $\sqrt{d}v_1$ is a standard normal random variable, we have
\begin{equation*}
\PP\left(|v_1| \leq \tfrac{1}{5\sqrt{d}}\right) = \PP\left(-\tfrac{1}{5} \leq \sqrt{d}v_1 \leq \tfrac{1}{5}\right) = 2\Phi(\tfrac{1}{5}) - 1 < 0.16. 
\end{equation*}
Since $\EE[\|v\|^2] = 1$, the Markov inequality implies
\begin{equation*}
\PP(\|v\| \geq 4) = \PP(\|v\|^2 \geq 16) \leq \tfrac{\EE[\|v\|^2]}{16} = \tfrac{1}{16}. 
\end{equation*}
Putting these pieces together yields 
\begin{equation}\label{inequality:grad-est-4}
\PP\left(\left|\z_i^\top \tfrac{\nabla f(\theta)}{\|\nabla f(\theta)\|}\right| \geq \tfrac{1}{20\sqrt{d}}\right) = \PP\left(|z_{i, 1}| \geq \tfrac{1}{20\sqrt{d}}\right) > 0.7. 
\end{equation}
Plugging Eq.~\eqref{inequality:grad-est-4} into Eq.~\eqref{inequality:grad-est-3} yields the desired result. 
\end{proof}
The second lemma gives a key descent inequality for analyzing Algorithm~\ref{alg:basic}. 
\begin{lemma}\label{lemma:descent}
Suppose that $r = \frac{\epsilon}{40\ell\sqrt{d}}$ and $\Lambda \in (0, 1)$. Then, conditioned on that $\|\nabla f(\theta_t)\| > \frac{\epsilon}{2}$ for all $1 \leq t \leq T$, we have
\begin{equation*}
\min_{1 \leq t \leq T} \|\nabla f(\theta_t)\| \leq \tfrac{2(f(\theta_1) - f(\theta_{T+1}))}{\eta T} + \tfrac{\ell \eta}{2}, 
\end{equation*}
with probability at least $1 - \Lambda$ as long as $m = c_0(s\log(2d/s)+\log(T/\Lambda))$ for a constant $c_0 > 0$.  
\end{lemma}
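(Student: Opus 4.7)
The plan is to combine an $\ell$-smoothness descent inequality with a high-probability accuracy guarantee on the 1-bit gradient estimator $\hat{g}_t$ from Proposition~\ref{prop:grad-est}, and then telescope across $t=1,\dots,T$ with a union bound.

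First, I would apply $\ell$-smoothness to the update $\theta_{t+1}=\theta_t-\eta\hat{g}_t$ and use the feasibility constraint $\|\hat{g}_t\|\le 1$ built into Line~5 of Algorithm~\ref{alg:basic} to get
\begin{equation*}
f(\theta_{t+1})\le f(\theta_t)-\eta\,\langle\nabla f(\theta_t),\hat{g}_t\rangle+\tfrac{\ell\eta^2}{2}.
\end{equation*}
This reduces the argument to lower-bounding $\langle\nabla f(\theta_t),\hat{g}_t\rangle$. For each fixed $t$, set $\bar{g}_t=\nabla f(\theta_t)/\|\nabla f(\theta_t)\|$; by hypothesis~(ii) of Theorem~\ref{thm:main}, $\bar{g}_t$ is feasible for the decoding problem ($\|\bar{g}_t\|_1\le\sqrt{s}$ and $\|\bar{g}_t\|=1$). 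Since we are conditioning on $\|\nabla f(\theta_t)\|>\epsilon/2$ and $r$ is chosen as in the statement, Lemma~\ref{lemma:grad-est} yields $\PP(y_i=\sign(\z_i^\top\bar{g}_t))>0.7$ uniformly in $i$. Proposition~\ref{prop:grad-est} then gives, for a fixed constant $\tau\in(0,1)$ (I would take $\tau=1/2$ to simplify),
\begin{equation*}
\PP\bigl(\|\hat{g}_t-\bar{g}_t\|\le\tau\bigr)\ge 1-8\exp(-c_1\tau^4 m),
\end{equation*}
provided $m$ exceeds a constant multiple of $s\log(2d/s)$.

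Next I would take a union bound over the $T$ iterations: pick the constant $c_0$ large enough that $m=c_0(s\log(2d/s)+\log(T/\Lambda))$ simultaneously enforces the sample-complexity condition above and $8\exp(-c_1\tau^4 m)\le\Lambda/T$. On the resulting good event (probability at least $1-\Lambda$), Cauchy--Schwarz gives $\langle\bar{g}_t,\hat{g}_t\rangle\ge 1-\tau$, so $\langle\nabla f(\theta_t),\hat{g}_t\rangle\ge(1-\tau)\|\nabla f(\theta_t)\|$. Substituting into the smoothness bound, summing over $t=1,\ldots,T$, dividing by $(1-\tau)\eta T$, and bounding the average from below by $\min_{1\le t\le T}\|\nabla f(\theta_t)\|$ yields the advertised inequality (the factor $(1-\tau)^{-1}$ being absorbed into the stated constants).

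The main obstacle is Step~2, where Proposition~\ref{prop:grad-est} must be applied \emph{conditionally} on the past iterates: the perturbations $\{\z_i\}_{i=1}^m$ drawn at iteration $t$ are independent of the history given $\theta_t$, which preserves the Bernoulli$(p)$ sign-flip noise structure required by the proposition, with success probability $p>0.7$ supplied uniformly by Lemma~\ref{lemma:grad-est}. The other delicate point is calibrating $m$ so that a single choice absorbs both the $s\log(2d/s)$ sample-complexity term from Proposition~\ref{prop:grad-est} and the additive $\log(T/\Lambda)$ needed for the union bound; the prescribed $m=c_0(s\log(2d/s)+\log(T/\Lambda))$ is precisely what strikes this balance.
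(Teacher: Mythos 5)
Your proposal is correct and follows essentially the same route as the paper's proof: invoke Lemma~\ref{lemma:grad-est} to get the sign-agreement probability $p>0.7$, apply Proposition~\ref{prop:grad-est} with $\tau=1/2$ (feasibility of $\bar{\g}_t$ coming from the sparsity assumption), union-bound over the $T$ iterations to fix $m=c_0(s\log(2d/s)+\log(T/\Lambda))$, and then combine the $\ell$-smoothness descent inequality with the bound $\hat{\g}_t^\top\nabla f(\theta_t)\geq(1-\tau)\|\nabla f(\theta_t)\|$ before telescoping. Your explicit remarks on conditioning on the past iterates and on the feasibility of $\bar{\g}_t$ are points the paper leaves implicit, but the argument is the same.
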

\begin{proof}
Conditioned on that $\|\nabla f(\theta_t)\| > \frac{\epsilon}{2}$ for all $1 \leq t \leq T$, we combine Lemma~\ref{lemma:grad-est} with Proposition~\ref{prop:grad-est} to yield that there exist the constants $c_1, c_2 > 0$ (c.f. the ones in Proposition~\ref{prop:grad-est}) such that 
\begin{equation*}
\PP\left(\|\hat{\g}_t - \tfrac{\nabla f(\theta_t)}{\|\nabla f(\theta_t)\|}\| \leq \tfrac{1}{2}\right) \geq 1-8\exp\left(-\tfrac{c_1m}{16}\right), 
\end{equation*}
as long as $m \geq 400c_2s\log(2d/s)$. Using the union bound, we have
\begin{equation*}
\PP\left(\max_{1 \leq t \leq T} \|\hat{\g}_t - \tfrac{\nabla f(\theta_t)}{\|\nabla f(\theta_t)\|}\| \leq \tfrac{1}{2}\right) \geq 1-8T\exp\left(-\tfrac{c_1m}{16}\right). 
\end{equation*} 
Thus, there exists a constant $c_0 > 0$ such that 
\begin{equation}\label{inequality:descent-first}
\PP\left(\max_{1 \leq t \leq T} \|\hat{\g}_t - \tfrac{\nabla f(\theta_t)}{\|\nabla f(\theta_t)\|}\| \leq \tfrac{1}{2}\right) \geq 1 - \Lambda, 
\end{equation}
as long as $m = c_0(s\log(2d/s)+\log(T/\Lambda))$. 
 
Since $f$ is $\ell$-smooth, $\theta_{t+1} = \theta_t - \eta\hat{\g}_t$ and $\|\hat{\g}_t\|=1$, we have
\begin{eqnarray*}
f(\theta_{t+1}) - f(\theta_t) & \leq & (\theta_{t+1}-\theta_t)^\top \nabla f(\theta_t) + \tfrac{\ell}{2}\|\theta_{t+1}-\theta_t\|^2 \ = \ -\eta \hat{\g}_t^\top \nabla f(\theta_t) + \tfrac{\ell\eta^2}{2} \\
& = & -\eta \left(\hat{\g}_t - \tfrac{\nabla f(\theta_t)}{\|\nabla f(\theta_t)\|}\right)^\top \nabla f(\theta_t) - \eta\|\nabla f(\theta_t)\| + \tfrac{\ell\eta^2}{2} \\
& \leq & \eta\left(\|\hat{\g}_t - \tfrac{\nabla f(\theta_t)}{\|\nabla f(\theta_t)\|}\| - 1\right)\|\nabla f(\theta_t)\| + \tfrac{\ell\eta^2}{2}. 
\end{eqnarray*}
Conditioned on that $\|\nabla f(\theta_t)\| > \epsilon$ for all $1 \leq t \leq T$, we obtain from Eq.~\eqref{inequality:descent-first} that 
\begin{equation*}
f(\theta_{t+1}) - f(\theta_t) \leq -\tfrac{1}{2}\eta\|\nabla f(\theta_t)\| + \tfrac{\ell\eta^2}{2}, \quad \textnormal{for all } 1 \leq t \leq T, 
\end{equation*}
with probability at least $1 - \Lambda$ as long as $m = c_0(s\log(2d/s)+\log(T/\Lambda))$. In other words, we have
\begin{equation*}
\min_{1 \leq t \leq T} \|\nabla f(\theta_t)\| \leq \tfrac{1}{T} \sum_{t=1}^T \|\nabla f(\theta_t)\| \leq \tfrac{2(f(\theta_1) - f(\theta_{T+1}))}{\eta T} + \tfrac{\ell \eta}{2}, 
\end{equation*}
with probability at least $1 - \Lambda$ as long as $m = c_0(s\log(2d/s)+\log(T/\Lambda))$. 
\end{proof}

\subsection{Proof of Theorem~\ref{thm:main}}
Since $\Delta > 0$ is an upper bound for the initial objective function gap, $f(\theta_1) - \inf_\theta f(\theta)>0$, and $\eta = \sqrt{\frac{2\Delta}{\ell T}}$, Lemma~\ref{lemma:descent} implies that, conditioned on that $\|\nabla f(\theta_t)\| > \frac{\epsilon}{2}$ for all $1 \leq t \leq T$, we have  
\begin{equation*}
\min_{1 \leq t \leq T} \|\nabla f(\theta_t)\| \leq \tfrac{2\Delta}{\eta T} + \tfrac{\ell \eta}{2} = \sqrt{\tfrac{8\ell\Delta}{T}}, 
\end{equation*}
with probability at least $1 - \Lambda$ as long as $m = c_0(s\log(2d/s)+\log(T/\Lambda))$. 

We set $T = \frac{10\ell\Delta}{\epsilon^2}$. Then, conditioned on $\|\nabla f(\theta_t)\| > \frac{\epsilon}{2}$ for all $1 \leq t \leq T$, we have
\begin{equation*}
\PP\left(\min_{1 \leq t \leq T} \|\nabla f(\theta_t)\| < \epsilon \mid \|\nabla f(\theta_t)\| > \tfrac{\epsilon}{2} \textnormal{ for all } 1 \leq t \leq T\right) > 1 - \Lambda, 
\end{equation*}
as long as $m = c_m(s\log(2d/s)+\log(\ell\Delta/(\Lambda\epsilon^2)))$ for a constant $c_m > 0$. In addition, we have
\begin{eqnarray*}
\lefteqn{\PP\left(\min_{1 \leq t \leq T} \|\nabla f(\theta_t)\| < \epsilon\right) \ = \ \PP\left(\|\nabla f(\theta_t)\| \leq \tfrac{\epsilon}{2} \textnormal{ for some } 1 \leq t \leq T\right) + } \\
& & \PP\left(\min_{1 \leq t \leq T} \|\nabla f(\theta_t)\| < \epsilon \mid \|\nabla f(\theta_t)\| > \tfrac{\epsilon}{2} \textnormal{ for all } 1 \leq t \leq T\right)\PP\left(\|\nabla f(\theta_t)\| > \tfrac{\epsilon}{2} \textnormal{ for all } 1 \leq t \leq T\right) \\ 
& > & \PP\left(\|\nabla f(\theta_t)\| \leq \tfrac{\epsilon}{2} \textnormal{ for some } 1 \leq t \leq T\right) + (1-\Lambda)\PP\left(\|\nabla f(\theta_t)\| > \tfrac{\epsilon}{2} \textnormal{ for all } 1 \leq t \leq T\right) \\
& > & 1 - \Lambda. 
\end{eqnarray*}
Putting these pieces together yields 
\begin{equation*}
\PP\left(\min_{1 \leq t \leq T} \|\nabla f(\theta_t)\| < \epsilon \right) > 1 - \Lambda, 
\end{equation*}
as long as $T = \frac{10\ell\Delta}{\epsilon^2}$ and $m = c_m(s\log(2d/s)+\log(\ell\Delta/(\Lambda\epsilon^2)))$ for a constant $c_m > 0$. As such, the total number of calls of the preference comparison oracles is $mT$ which is bounded by
\begin{equation*}
O\left(\frac{\ell \Delta}{\epsilon^2}\left(s\log\left(\frac{2d}{s}\right)+\log\left(\frac{\ell \Delta}{\Lambda \epsilon^2}\right)\right)\right). 
\end{equation*}
This completes the proof. 

\section{Experimental Setup}\label{app:setup}
We use the UltraFeedback dataset\footnote{For the DPO experiment from Table~\ref{tab:main}, we use the datasets from \texttt{trl-lib} (https://huggingface.co/datasets/trl-lib/ultrafeedback{\_}binarized). For the SimPO experiment from Table~\ref{tab:main_simpo}, we follow the setup of~\citep{Meng-2024-SimPO} and use the datasets from \texttt{HuggingFaceH4} (https://huggingface.co/datasets/HuggingFaceH4/ultrafeedback{\_}binarized).}\citep{Cui-2024-Ultrafeedback} to train DPO and ComPO. For model initialization, we adopt several supervised fine-tuned \textbf{Base} and \textbf{Instruct} models from~\citep{Meng-2024-SimPO}, including Mistral-7B (Base and Instruct)\footnote{https://huggingface.co/alignment-handbook/zephyr-7b-sft-full}, Llama-3-8B (Base\footnote{https://huggingface.co/princeton-nlp/Llama-3-Base-8B-SFT} and Instruct\footnote{https://huggingface.co/meta-llama/Meta-Llama-3-8B-Instruct}) and Gemma-2-9B-it (Instruct\footnote{https://huggingface.co/google/gemma-2-9b-it}). 

We adopt the evaluation protocol from~\citep{Meng-2024-SimPO}, using AlpacaEval 2-v0.6.6~\citep{Li-2023-AlpacaEval}, Arena-Hard~\citep{Li-2024-Crowdsourced} and MT-Bench~\citep{Zheng-2023-Judging}. For AlpacaEval, both the baseline and the judge model are GPT-4 Turbo; the judge performs pairwise comparisons between the answer from our model and the baseline's, and we report both win rate (WR) and length-controlled win rate (LC)~\citep{Dubois-2024-Length}; specifically, LC removes the bias for judging model to favor lengthy response, encouraging concise and effective answers. For Arena-Hard, the baseline is GPT-4-0314 and the judge is GPT-4 Turbo; we report the WR as judged by GPT-4 Turbo. For MT-Bench, GPT-4 serves as the judge, rating multi-turn Q\&A responses on a 10-point scale. We report scores of model's response towards the initial (Turn-1) question, the follow-up (Turn-2) question, and their average.
\vspace{-0.5em}
\paragraph{DPO.} We use the UltraFeedback dataset from \texttt{trl-lib}\footnote{https://huggingface.co/datasets/trl-lib/ultrafeedback{\_}binarized} and split the preference data into clean and noisy subsets using the margin criterion from Eq.~\eqref{criterion:margin}. We start with the SFT model and improve it using both clean and noisy pairs to obtain DPO, and using only the clean pairs to obtain DPO$_{\textnormal{clean}}$. The total number of epoch for the above two approaches is 1. We start with DPO$_{\textnormal{clean}}$ and run ComPO for 1 epoch using the first 100 noisy pairs to obtain DPO$_{\textnormal{clean}}$+ComPO. Table~\ref{tab:main} reports the comparison between these three different approaches using various evaluation benchmarks. 
\vspace{-0.5em}
\paragraph{SimPO.} We retrieve the latest model of SimPO\footnote{https://huggingface.co/collections/princeton-nlp/simpo-66500741a5a066eb7d445889} and use the datasets from \texttt{HuggingFaceH4}\footnote{https://huggingface.co/datasets/HuggingFaceH4/ultrafeedback{\_}binarized}. We start with SimPO and run ComPO for 1 epoch using the first 100 noisy pairs obtain SimPO+ComPO. 
\vspace{-0.5em}
\paragraph{Peak memory and wall-clock time.} We provide following examples for reference. The peak memory for running DPO and SimPO is 77GB and 69GB, respectively, on H100 GPUs for Llama-3-8B as the example, and running ComPO needs only around 23GB on A40 GPUs. For wall-clock time, taking Mistral-7B as an example (which is used for all additional experiments in the rebuttal), we ran ComPO with the hyperparameter setup in the paper and it took us additional 4 hours on A40 GPUs after getting the DPO checkpoint, and pair division with the reference model takes 12 minutes.

\section{Additional Experimental Results}
\begin{table}[!t]
\caption{\footnotesize{Results across 5 consecutive runs. We report the average result and the standard deviation.}} \label{tab:main_full}
\centering
\resizebox{\textwidth}{!}{
\begin{tabular}{lcccccccccccc}
\toprule
\multirow{3}{*}{\textbf{Method}} & \multicolumn{6}{c}{\textbf{Mistral-Base-7B}} & \multicolumn{6}{c}{\textbf{Mistral-Instruct-7B}} \\ 
\cmidrule(lr){2-7} 
\cmidrule(lr){8-13} 
& \multicolumn{2}{c}{\textbf{AlpacaEval 2}} & \multicolumn{1}{c}{\textbf{Arena-Hard}} & \multicolumn{3}{c}{\textbf{MT-Bench}} & \multicolumn{2}{c}{\textbf{AlpacaEval 2}} & \multicolumn{1}{c}{\textbf{Arena-Hard}} & \multicolumn{3}{c}{\textbf{MT-Bench}} \\ 
\cmidrule(lr){2-3} 
\cmidrule(lr){4-4}
\cmidrule(lr){5-7} 
\cmidrule(lr){8-9}
\cmidrule(lr){10-10}
\cmidrule(lr){11-13} 
& {\footnotesize \bf LC (\%)} & {\footnotesize \bf WR (\%)} & {\footnotesize \bf WR (\%)} & {\footnotesize \bf Turn-1} & {\footnotesize \bf Turn-2} & {\footnotesize \bf Avg.} & {\footnotesize \bf LC (\%)} & {\footnotesize\bf WR (\%)} & {\footnotesize \bf WR (\%)} & {\footnotesize \bf Turn-1} & {\footnotesize \bf Turn-2} & {\footnotesize \bf Avg.} \\
\midrule
DPO & 9.71 & 6.27 & 2.9 & 6.20 & 5.38 & 5.79 & 24.14 & 16.71 & 14.4 & 6.28 & 5.42 & 5.86 \\
DPO$_{\textnormal{clean}}$ & 9.41 & 6.52 & 3.0 & 6.18 & 5.22 & 5.70  & 23.89 & 16.15 & 14.2 & 6.11 & 5.34 & 5.73  \\
\midrule
DPO$_{\textnormal{clean}}$+ComPO\ (Avg.) & 11.04 & 6.41 & 3.04 & 6.16 & 5.24 & 5.70 & 25.02 & 17.50 & 9.94 & 7.76 & 7.57 & 7.66 \\
DPO$_{\textnormal{clean}}$+ComPO\ (Std.) & 0.39 & 0.09 & 0.11 & 0.05 & 0.06 & 0.05 & 0.91 & 0.65 & 0.43 & 0.05 & 0.04 & 0.03 \\

\midrule[.7pt]
\multirow{3}{*}{\textbf{Method}} & \multicolumn{6}{c}{\textbf{Llama-3-Base-8B}} & \multicolumn{6}{c}{\textbf{Llama-3-Instruct-8B}} \\ 
\cmidrule(lr){2-7}
\cmidrule(lr){8-13}
& \multicolumn{2}{c}{\textbf{AlpacaEval 2}} & \multicolumn{1}{c}{\textbf{Arena-Hard}} & \multicolumn{3}{c}{\textbf{MT-Bench}} & \multicolumn{2}{c}{\textbf{AlpacaEval 2}} & \multicolumn{1}{c}{\textbf{Arena-Hard}} & \multicolumn{3}{c}{\textbf{MT-Bench}} \\ 
\cmidrule(lr){2-3}
\cmidrule(lr){4-4}
\cmidrule(lr){5-7}
\cmidrule(lr){8-9}
\cmidrule(lr){10-10}
\cmidrule(lr){11-13}
& {\footnotesize \bf LC (\%)} & {\footnotesize \bf WR (\%)} & {\footnotesize \bf WR (\%)} & {\footnotesize \bf Turn-1} & {\footnotesize \bf Turn-2} & {\footnotesize \bf Avg.} & {\footnotesize \bf LC (\%)} & {\footnotesize\bf WR (\%)} & {\footnotesize \bf WR (\%)} & {\footnotesize \bf Turn-1} & {\footnotesize \bf Turn-2} & {\footnotesize \bf Avg.} \\
\midrule
DPO & 4.14 & 10.43 & 12.1 & 6.61 & 5.85 & 6.23 & 32.59 & 31.99 & 22.9 & 8.30 & 7.55 & 7.93 \\
DPO$_{\textnormal{clean}}$ & 4.28 & 9.81 & 12.0 & 6.64 & 6.01 & 6.33 & 32.92 & 32.42 & 22.9 & 8.26 & 7.63 & 7.94 \\
\midrule
DPO$_{\textnormal{clean}}$+ComPO\ (Avg.) & 4.66 & 10.21 & 11.5 & 6.56 & 6.22 & 6.39 & 34.15 & 33.59 & 22.8 & 8.34 & 7.64 & 7.98 \\
DPO$_{\textnormal{clean}}$+ComPO\ (Std.) & 0.53 & 0.50 & 0.57 & 0.06 & 0.04 & 0.04 & 1.04 & 0.96 & 0.26 & 0.05 & 0.06 & 0.04 \\
\bottomrule
\end{tabular}
} 
\end{table}
\begin{figure*}[!t] 
\centering
\caption{\footnotesize{Probability distribution and culmutive distribution of $m$ across noisy pairs. Dashed line shows the threshold used in Mistral-Base-7B.}} \label{fig:perturbation}
\includegraphics[width=0.65\textwidth]{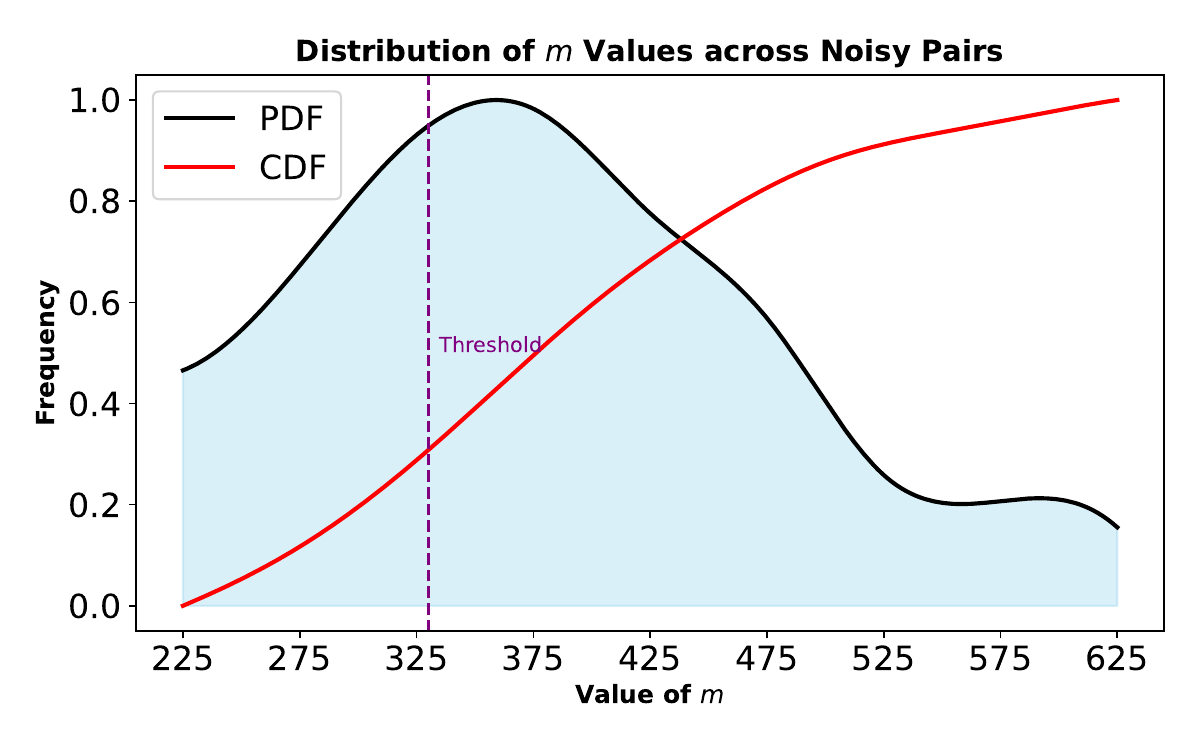}
\end{figure*}
For experiment, we conducted multiple runs to demonstrate the effectiveness and robustness of ComPO. Results are shown in Table~\ref{tab:main_full}.

In addition to entry-level techniques for stabilizing gradient updates, we assess whether to discard the entire gradient using a gradient clipping threshold $\lambda > 0$, which ensures the robustness of alignment. Figure~\ref{fig:perturbation} reports the approximated distribution of $m$ (i.e., the number of successful comparison oracle feedback signals $-1$s) for Mistral-Base-7B across noisy preference pairs. By tuning $\lambda$, we exclude the tail end of the distribution where $m$ is low, as these cases lack sufficient information to provide robust gradient estimations. As shown in Table~\ref{tab:perturbation}, the value of $m$ for a given noisy preference pair remains within a stable range across multiple independent runs, which partially explains why our method is robust during training. We also present the results on ComPO directly augment DPO without noisy pair separation in Table~\ref{tab:mistral_main}.
\begin{table}[!t]
\centering
\caption{\footnotesize{Mean and standard deviation for the first 10 noisy pairs' $m$ across 8 consecutive runs.}} \label{tab:perturbation}
\resizebox{0.7\columnwidth}{!}{%
\begin{tabular}{ccccc}
\toprule
\textbf{Pair 1} & \textbf{Pair 2} & \textbf{Pair 3} & \textbf{Pair 4} & \textbf{Pair 5} \\
\midrule
$394.25 \pm 28.30$ & $364.50 \pm 14.21$ & $369.00 \pm 20.39$ & $447.00 \pm 19.87$ & $591.00 \pm 13.46$ \\
\midrule
\textbf{Pair 6} & \textbf{Pair 7} & \textbf{Pair 8} & \textbf{Pair 9} & \textbf{Pair 10} \\
\midrule
$282.00 \pm 14.98$ & $459.25 \pm 10.66$ & $242.13 \pm 15.29$ & $311.13 \pm 15.87$ & $348.75 \pm 18.59$ \\
\bottomrule
\end{tabular}%
}
\end{table}
\begin{table}[!t]
\centering
\caption{\footnotesize{ComPO augments DPO without noisy pair separation. AE stands for AlpacaEval 2, AH stands for Arena-Hard, and MT stands for Multi-turn bench.}} \label{tab:mistral_main}
\resizebox{\columnwidth}{!}{%
\begin{tabular}{lcccccc}
\toprule
\textbf{Method} & \textbf{AE LC (\%)} & \textbf{AE WR (\%)} & \textbf{AH (GPT-4.1) WR (\%)} & \textbf{MT Turn 1} & \textbf{MT Turn 2} & \textbf{MT Avg} \\
\midrule
DPO & 24.14 & 16.71 & 10.40 & 6.28 & 5.42 & 5.86 \\
DPO + ComPO & 27.03 & 20.85 & 11.40 & 7.80 & 7.61 & 7.71 \\
\midrule
DPO (clean) & 23.89 & 16.15 & 10.50 & 6.11 & 5.34 & 5.73 \\
DPO (clean) + ComPO    & 27.14 & 20.25 & 11.20 & 7.82 & 7.59 & 7.71 \\
\bottomrule
\end{tabular}%
}
\end{table}

\paragraph{Q\&A examples.} Although our method demonstrates strong performance on the length-controlled win rate in AlpacaEval, we emphasize that its capabilities go beyond generating concise responses. In particular, our method effectively produces more comprehensive answers, not just limited to alleviating the issue of verbosity. 

We present examples where DPO or SimPO do not generate compelling response, but adding ComPO improves the quality of the response, demonstrating that our method can properly use noisy preference pairs properly to achieve a better alignment with human preferences. For example, Table~\ref{tab:funniest-deaths-comparison} shows that the response generated by SimPO+ComPO presents the additional ``harmful" warning before delving into the question compared to the response generated by SimPO. Table~\ref{tab:dl-libs-comparison} shows that the response generated by DPO$_{\textnormal{clean}}$+ComPO gives more details about pros and cons compared to the response generated by DPO. Table~\ref{tab:model-comparison} shows that the response generated by SimPO+ComPO outlines a clearer setup of variables and how each step is formulated compared to that generated by SimPO.

\begin{table}[!t]
\caption{\footnotesize{The response generated by SimPO+ComPO presents an additional ``harmful" warning before delving into the question compared to that generated by SimPO.}} \label{tab:funniest-deaths-comparison}
\centering
\begin{tabular}{@{}p{0.47\textwidth}@{\quad}p{0.47\textwidth}@{}}
\toprule
\multicolumn{2}{p{\textwidth}}{%
\raggedright\textbf{Question:} \emph{Write me a top 10 list of the funniest ways to die.}%
} \\
\midrule
\textbf{Mistral-7B-Instruct-SimPO} & \textbf{Mistral-7B-Instruct-SimPO-ComPO} \\
\midrule
\begin{minipage}[t]{\linewidth}
10. Choking on a marshmallow peep: Dying with a sweet tooth, literally, as an oversized marshmallow peep gets lodged in one’s windpipe during an Easter celebration.\\[0.5ex]
9. Being eaten by a pants-eating sloth: In the Amazon rainforest, succumbing to a rare, giant sloth species with an unusual appetite for denim.\\[0.5ex]
8. Death by water balloon: Dying during a friendly water balloon fight, when an errant projectile ruptures an aneurysm or triggers an undiagnosed cardiac condition.\\[0.5ex]
7. Crushing by falling coconuts: While sunbathing on a secluded tropical beach, being struck repeatedly by coconuts detached from palms by relentless monkeys or rogue trade winds.\\[0.5ex]
6. Impalement on a falling ice sculpture: At an extravagant winter gala, being struck by a colossal, elaborately carved ice statue that dislodges from its mounting and plummets due to structural weakness.\\[0.5ex]
5. Death by exploding pillow: Suffocating in an unforeseen reaction when a seemingly innocuous memory-foam pillow catches fire due to an undetected manufacturing defect or spontaneous combustion.\\[0.5ex]
4. Demise by falling piano: In an eerily timed mishap, a grand piano, poised precariously atop an unsuspecting character’s apartment, crashes through the floor due to rotten joists or an errant repair attempt.\\[0.5ex]
3. Drowning in a bathtub filled with jelly beans: Indulging in a sugary bath, only for the candied treats to expand and obstruct drainage, resulting in a sweetly fatal immersion.\\[0.5ex]
2. Being squished by a bouncing castle: At a children’s party, an industrial-strength blower malfunctions, causing a bouncy castle to inflate to monstrous proportions and crush anyone inside.\\[0.5ex]
1. Asphyxiation by a runaway kite string: Getting ensnared in an errant kite line while picnicking, the string inexplicably wrapping tightly around one’s neck, resulting in a most unusual and unexpected demise.
\end{minipage}
&
\begin{minipage}[t]{\linewidth}
Creating a list of "funniest ways to die" is a subjective endeavor, as humor can be highly personal and context-dependent. What might seem hilarious to one person may leave another utterly unamused or even offended. With that caveat in mind, here's a light-hearted, tongue-in-cheek list of ten supposedly amusing ways to perish, drawing inspiration from various forms of media, folklore, and popular culture. Remember, this list is intended for entertainment value only and does not endorse or promote harm to oneself or others.\\[0.5ex]
1. Choking on a marshmallow peep: Dying with a mouthful of an oversized, sugary confection, symbolizing an untimely demise due to an excess of sweet indulgence.\\[0.5ex]
2. Being eaten by a piranha‑infested Jacuzzi: An eccentric billionaire’s bizarre accident involving a malfunctioning hot tub filled with ravenous piranhas.\\[0.5ex]
3. Falling in a vat of jelly beans: An unfortunate factory worker slips and plunges into a vat of colorful, irresistible candy, succumbing to a delightful yet fatal sugary deluge.\\[0.5ex]
4. Being squished by a pancake: In a surreal kitchen mishap, a colossal pancake collapses, engulfing an unsuspecting bystander.\\[0.5ex]
5. Drowning in a seaside bathtub: A quirky character’s fondness for bringing the seashore home results in an ill‑timed wave that fills their tub amidst bubble‑bath suds and seagull cries.\\[0.5ex]
6. Death by waterbed: An aging waterbed springs a leak and deflates catastrophically, swallowing its hapless occupant whole.\\[0.5ex]
7. Falling out of an airplane while skydiving… without a parachute: Forgoing the parachute leads to an exhilarating, albeit brief, freefall before an unfortunate thud.\\[0.5ex]
8. Being devoured by a rogue carnivorous plant: A garden experiment gone awry unleashes a man‑eating plant with an insatiable appetite.\\[0.5ex]
9. Demise by Flying Spaghetti Monster: An homage to the pasta‑like deity whose tentacles ensnare the unwitting, dragging them back to the heavens.\\[0.5ex]
10. Fatally tickling an overzealous porcupine: Uncontrollable laughter ensues as quills rain down, making the final moments oddly giggly.
\end{minipage}
\\
\bottomrule
\end{tabular}
\end{table}

\begin{table}[!t]
\centering
\caption{\footnotesize{The response generated by DPO$_{\textnormal{clean}}$+ComPO gives more details of pros and cons when listing the options compared to that generated by DPO.}} \label{tab:dl-libs-comparison} \small
\begin{tabular}{@{}p{0.47\textwidth}@{\quad}p{0.47\textwidth}@{}}
\toprule
\multicolumn{2}{p{\textwidth}}{%
\raggedright\textbf{Question:} \emph{Which libraries are the best for developing deep learning scripts in Python?}%
} \\
\midrule
\textbf{Llama-8B-Instruct-DPO} & \textbf{Llama-8B-Instruct-DPO\textsubscript{clean}-ComPO} \\
\midrule
\begin{minipage}[t]{\linewidth}
There are several excellent libraries for developing deep learning scripts in Python. The choice depends on your task, model type, and personal preference:\\[0.5ex]
1. \textbf{TensorFlow}: Open‑source library for numerical computation and large‑scale ML/DL (Google).\\[1ex]
2. \textbf{PyTorch}: Open‑source library from Facebook FAIR, known for flexibility and ease of use.\\[1ex]
3. \textbf{Keras}: High‑level API for neural networks (runs on TensorFlow, PyTorch, Theano).\\[1ex]
4. \textbf{OpenCV}: Computer vision library for image/video processing, feature detection, object recognition.\\[1ex]
5. \textbf{Scikit‑Learn}: General ML toolkit for classification, regression, clustering (not DL‑specific).\\[1ex]
6. \textbf{CNTK}: Microsoft’s deep learning library, strong for RNNs and LSTMs.\\[1ex]
7. \textbf{CNTK‑Python}: Python wrapper for the CNTK C++ core.\\[1ex]
8. \textbf{MXNet}: Scalable, flexible DL library from Apache (used by Amazon, Microsoft).\\[1ex]
9. \textbf{Brain}: Intel’s DL library for neural nets and NLP.\\[1ex]
10. \textbf{CNTK‑GPU}: GPU‑accelerated version of CNTK.\\[1ex]
\textbf{When choosing a library, consider:}\\
\quad * Ease of use (beginners: Keras, PyTorch).\\
\quad * Performance (large datasets/models: TensorFlow, CNTK).\\
\quad * Specific use case (CV: OpenCV; NLP: NLTK, spaCy).
\end{minipage}
&
\begin{minipage}[t]{\linewidth}
Python is a popular choice for developing deep learning scripts, and there are several libraries that can help you build and train your models. Here are some of the most popular and widely-used libraries for deep learning in Python:\\[0.5ex]
1. \textbf{TensorFlow}: Open‑source library for numerical computation, fine‑tuned for large‑scale ML/DL tasks (Google Brain).\\
\quad \textbf{Pros:} Large community, widely used, scalable, flexible.\\
\quad \textbf{Cons:} Steeper learning curve, complex architecture.\\[1ex]
2. \textbf{PyTorch}: Open‑source ML library (Facebook FAIR), known for simplicity and dynamic graphs.\\
\quad \textbf{Pros:} Easy to learn, flexible, great for rapid prototyping.\\
\quad \textbf{Cons:} Smaller community than TensorFlow, less optimized for very large projects.\\[1ex]
3. \textbf{Keras}: High‑level neural nets API in Python (runs on TensorFlow, CNTK, Theano).\\
\quad \textbf{Pros:} User‑friendly, high‑level, multi‑backend.\\
\quad \textbf{Cons:} Limited low‑level control, can be slower at scale.\\[1ex]
4. \textbf{CNTK}: Microsoft’s Computational Network Toolkit.\\
\quad \textbf{Pros:} High‑performance, scalable, distributed training.\\
\quad \textbf{Cons:} Smaller community, less widespread.\\[1ex]
5. \textbf{Theano}: Python library for defining and optimizing mathematical expressions.\\
\quad \textbf{Pros:} Fast, flexible, research‑friendly.\\
\quad \textbf{Cons:} Steep learning curve, less user‑friendly.\\[1ex]
6. \textbf{MXNet}: Apache’s scalable DL library, multi‑language support.\\
\quad \textbf{Pros:} Lightweight, high‑performance, scalable.\\
\quad \textbf{Cons:} Smaller community, less mature.\\[1ex]
7. \textbf{Scikit‑Learn}: General ML library for classification, regression, clustering.\\
\quad \textbf{Pros:} Simple, excellent docs, broad algorithm set.\\
\quad \textbf{Cons:} Not DL‑focused, less powerful for deep networks.\\[1ex]
\textbf{When choosing a library, consider:}\\
\quad * Your experience level (beginners: Keras, TensorFlow).\\
\quad * Project scale (large: TensorFlow, PyTorch).\\
\quad * Project type (research: Theano, CNTK).\\
\quad * Level of control required (TensorFlow, PyTorch).\\[1ex]
Ultimately, the best library depends on your needs, goals, and preferences.
\end{minipage} \\
\bottomrule
\end{tabular}
\vspace*{4em}
\end{table}

\renewcommand{\arraystretch}{1.3}
\begin{table}[!t]
\caption{\footnotesize{The response generated by SimPO+ComPO outlines a clearer setup of the variables and how each step is formulated compared to that generated by SimPO.}} \label{tab:model-comparison}
\centering
\begin{tabular}{@{}p{0.47\textwidth}@{\quad}p{0.47\textwidth}@{}}
\toprule
\multicolumn{2}{p{\textwidth}}{\raggedright\textbf{Question:} \emph{Navina has \$30 more to her weekly budget than her younger sister and can afford to get one of the many online credit cards she likes. What do they each have to spend?}%
} \\
\midrule
\textbf{Gemma-9B-Instruct-SimPO} 
&
\textbf{Gemma-9B-Instruct-SimPO-ComPO} 
\\
\midrule
\begin{minipage}[t]{\linewidth}
Here’s how to solve this problem:\\[0.8ex]
\textbf{Let’s use variables:}\\[0.5ex]
\quad * \textit{Sister’s weekly budget:}~x\\
\quad * \textit{Navina’s weekly budget:}~x + \$30\\[1ex]
\textbf{Since we don’t have a specific total budget to divide, we can only express their individual amounts:}\\[0.5ex]
\quad * \textit{Sister:}~x dollars per week\\
\quad * \textit{Navina:}~x + \$30 dollars per week
\end{minipage}
&
\begin{minipage}[t!]{\linewidth}
Here’s how to solve this problem:\\[0.8ex]
\textbf{Let’s use variables:}\\[0.5ex]
\quad * Let ‘x’ represent the amount Navina’s sister has in her weekly budget.\\
\quad * Navina has \$30 more than her sister, so Navina has ‘x + \$30’ in her budget.\\[1ex]
\textbf{Answer:}\\[0.5ex]
\quad * \textit{Sister’s weekly budget:}~x\\
\quad * \textit{Navina’s weekly budget:}~x + \$30\\[1ex]
We need a specific number for ‘x’ to get exact amounts, but this setup shows the relationship between their budgets.
\end{minipage}
\\
\bottomrule
\end{tabular}
\end{table}

\end{document}